\newmdenv[
	roundcorner=10pt,
	backgroundcolor=Red!30,
]{important}
\newtheorem{proposition}{Proposition}
\newtheorem{problem}{Open Problem}
\newcommand*{\expect}[2][]{\ensuremath{\mathbb{E}_{#1} \left[ #2 \right] }} 
\newcommand*{\expecti}[2][]{\ensuremath{\mathbb{E}_{#1} [ #2 ] }} 
\newcommand{\cond}{\,\vert\,}
\newcommand{\logpart}{A}
\newcommand{\conj}{{\logpart^*}}
\newcommand{\bregman}{\cB_\logpart}
\newcommand{\bregmanconj}{\cB_{\logpart^*}}
\newcommand{\nat}{\theta}
\newcommand{\m}{\mu}
\newcommand{\meanp}{\m}
\newcommand{\lr}{\gamma} 
\newcommand{\lin}[1]{\left\langle#1\right\rangle}
\newcommand{\MAPm}{\hat \m_n}
\newcommand{\MAPt}{\hat \nat_n}
\DeclareMathSymbol{\shortminus}{\mathbin}{AMSa}{"39}
\newcommand{\stgcvx}{\alpha} 
\newcommand{\smooth}{\beta} 
\begin{document}

\runningtitle{Convergence Rates for the MAP of an Exponential Family and SMD -- an Open Problem}

\runningauthor{R\'emi Le Priol, Frederik Kunstner, Damien Scieur, and Simon Lacoste-Julien}

\twocolumn[

\aistatstitle{Convergence Rates for the MAP of an Exponential Family\\ and Stochastic Mirror Descent -- an Open Problem}

\aistatsauthor{R\'emi Le Priol \And Frederik Kunstner}
\aistatsaddress{ 
	Mila, Université de Montreal 
	\And  University of British Columbia
	}
	
\aistatsauthor{Damien Scieur \And Simon Lacoste-Julien}
\aistatsaddress{Samsung, SAIT AI Lab, Montreal 
	\And Mila, Université de Montreal \\
	Samsung, SAIT AI Lab, Montreal\\
	Canada CIFAR AI Chair
}

]

\begin{abstract}
We consider the problem of upper bounding the expected log-likelihood sub-optimality of the maximum likelihood estimate (MLE), or a conjugate maximum a posteriori (MAP) for an exponential family, in a non-asymptotic way.
Surprisingly, we found no general solution to this problem in the literature. In particular, current theories do not hold for a Gaussian or in the interesting few samples regime.
After exhibiting various facets of the problem, we show we can interpret the MAP as running stochastic mirror descent (SMD) on the log-likelihood. However, modern convergence results do not apply for standard examples of the exponential family, highlighting holes in the convergence literature.
We believe solving this very fundamental problem may bring progress to both the statistics and optimization communities.
\end{abstract}

\doparttoc 
\faketableofcontents 

\begin{figure}[t]
	\centering
\includegraphics[width=.4\textwidth]{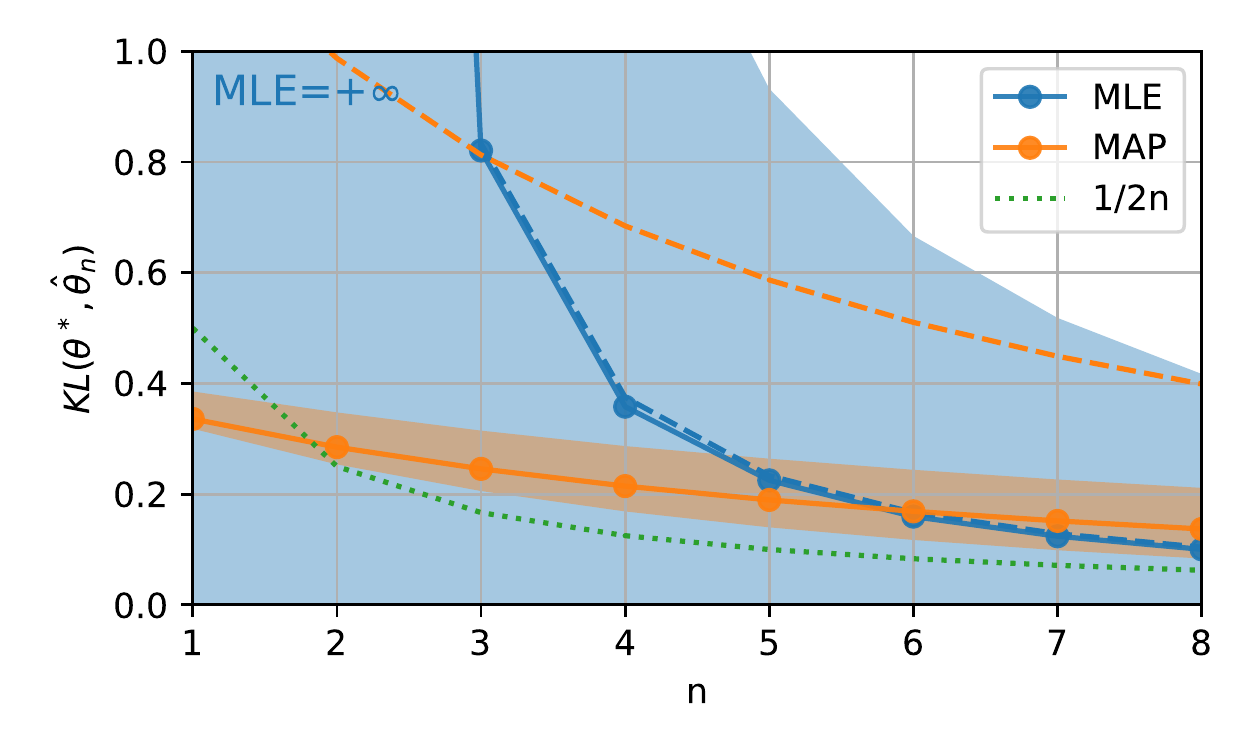}
	\caption{KL divergence~\eqref{eq:suboptimalityKL} for Gaussian variance (\S\ref{ssec:gaussian-variance}) MLE (blue) and MAP (orange) against number of samples $n$. 
		Solid curve  are average over $10^5$ trials.
		Dashed curves are upper bounds~\eqref{eq:MLE_rate} (blue) and~\eqref{eq:MAP_rate} (orange, not tight by a factor 2).
		Shaded areas are 90\% confidence interval.
		The MLE expected KL is infinite for $n=1$ and $n=2$, but for $n\geq3$ it quickly joins the upper bound~\eqref{eq:MLE_rate} and the $1/2n$ asymptote~\eqref{eq:asymptote}.
		MAP's expected KL is always finite, and it has lower variance than MLE, but it is slower to join the asymptote.
		We wish to find upper bounds similar to~\eqref{eq:MAP_rate} characterizing the relative importance of the prior and the few sample behavior of MAP for a variety of exponential families.
	}
	\label{fig:curves}
\end{figure}

\section{INTRODUCTION}
\label{sec:motivation}

\paragraph{Models}
Exponential families are among the most widely used simple parametric models of data, yet, we will highlight some open problems about them in this paper.
Many standard random variables are exponential families: Gaussians, categorical, gamma, or Dirichlet, for example.
They are flexible enough to model a variety of data sources $X$ and easy to describe with some sufficient statistics $T(X) \in \real^d$.
They are particularly appreciated for their convex log-likelihood
\alignn{
f(\nat) := \E[-\log p_\nat(X)] = \logpart(\nat) - \lin{\E[T(X)] , \nat},
\label{eq:defNLL}
}
where $\logpart$ is the convex log-partition function and \mbox{$\nat\in\Theta$} is the \textit{natural parameter}.
This convexity lays the foundation for generalized linear models \citep{mccullagh1989generalized} or variants of principal component analysis \citep{collins2001generalization}, among other applications.

\paragraph{Estimators}
In this paper, we consider the problem of estimating $\nat$ from a dataset $\mD = (X_1, \dots, X_n)$ of iid observations from $p_\theta$ in an exponential family.
In this case, not only is $f$ convex, but it yields a simple condition for the maximum-likelihood estimate (MLE)
\begin{align}
 \hat \mu_n^\text{MLE} = \nabla  \logpart(\hat \nat_n^\text{MLE}) = \frac{\sum_{i=1}^n T(x_i)}{n} \; .
	\label{eq:defMLE}
\end{align}
This rule is also known as moment matching.
Given a specific conjugate prior, a similar formula~\eqref{eq:defMAP} holds for the maximum a posteriori (MAP). In this paper, we will focus on analyzing MLE and MAP estimators.\footnote{A related analysis is present in the online-learning literature, but for different online estimators, which are less efficient than offline methods \citep{azoury2001relative,dasgupta2007online}.}

\paragraph{Statistical decision theory}
To assess the quality of an estimator $\hat \nat$ (and compare them), we need to define some notion of closeness to the correct parameter $\nat^*$.
We distinguish here two ways: \textit{distance in parameter space} and \textit{``distance'' between distributions}.
{\bf 1)} Distance in parameter space $d(\nat,\nat^*)$. This is the focus of \emph{parameter estimation}, yielding results such as the asymptotic efficiency of the MLE via the Cramer-Rao lower-bound \citep{aitken1942estimation} and a wealth of asymptotic results \citep{vdv1998asymptotic}.
In particular for sum of independent variables such as~\eqref{eq:defMLE}, large deviations theory \citep{varadhan1984large} characterizes concentration phenomena.
{\bf 2)} Distance between distributions, as studied in \emph{density estimation}.
For this purpose, the Kullback-Leibler (KL) divergence $\KL\paren{p_{\nat^*} || p_{\nat} }$  arises naturally from information theory,
but its lack of robustness to misspecification\footnote{
For $p$ and $q$ continuous densities,
$\KL(p||q) = +\infty$ if $\exists x, q(x)=0 \, \& \, p(x)>0$.
}
has led statisticians to study symmetric, better-behaved distances, such as the $L^2$ norm \citep[\S1.2]{tsybakov2009introduction}, the $L^1$ norm \citep{devroye2001combinatorial} or more recently the Hellinger distance \citep{baraud2017new}.
With exponential families, the KL divergence is also a Bregman divergence between parameters (see \S\ref{sec:problem}), thus drawing a connection between these two lines of research,
and raising the fundamental problem:
\begin{equation}
\boxed{
\begin{aligned}
	\textit{Find an upper}&\textit{ bound on the expected value of } \\
	&\KL(p_{\nat^*} || p_{\hat \nat_n^\text{MLE/MAP} }) \; .
\end{aligned}
}
\tag{$\star$}
\label{problem}
\end{equation}

There are already general asymptotic results (\S\ref{ssec:asymptote} and Fig.~\ref{fig:curves}), and a finite $n$ result when $\logpart$ is quadratic (e.g., $X$ is Gaussian with known variance)
or close to quadratic (\S\ref{ssec:quadratic}).
However, a  general solution for finite $n$ remains elusive. 
In this paper, we review partial solutions and give ideas on how to solve the problem.

\paragraph{Optimization} 
Stochastic optimization offers an interesting perspective on~\eqref{problem}.
Consider the problem
\alignn{
	\min_{\theta\in \Theta} f(\theta)\,,
	\label{eq:optimization_problem}
}
solved by $\nat^*\in \Theta$.
Setting $f$ as the log-likelihood~\eqref{eq:defNLL}, the suboptimality is equal to the KL:
\alignn{
	f(\nat) - f(\nat^*) = \KL\paren{p_{\nat^*} || p_{\nat} } .
	\label{eq:suboptimalityKL}
}
Both MLE and MAP can be seen as stochastic algorithms solving~\eqref{eq:optimization_problem}.
In particular, with exponential families, MAP is equivalent to stochastic mirror descent (SMD) \citep{nemirovski2009robust}.
Inspired by recent work~\citep{lepriol2021analysis, kunstner2020homeomorphic}, we consider using existing convergence rates for SMD to get the upper bound we seek.
Unfortunately, none of the current analyses apply, highlighting open problems for the analysis of SMD.

\paragraph{Expected Outcomes}
A solution to~\eqref{problem} can clarify the importance of the prior in MAP, in particular in the few sample regime. 
Also, it could enable stochastic optimization to tackle a broad class of barrier objectives.\footnote{we call \emph{barrier} an objective $f$ that is infinite on the boundaries of its domain (assuming they exist).}
A good example is the generalized linear model based on Gaussians with unknown mean and variance, for which there is currently no theory \citep{bach2013nonstronglyconvex}.
It could also help assess the impact of alternative forms of regularization (prior) for these models.

\paragraph{Contributions}
After formalizing the problem~\eqref{problem} (\S\ref{sec:problem}), along with its asymptotic properties (\S\ref{ssec:asymptote}), we make the following contributions.
\begin{itemize}
	\itemsep0em
	\item We provide an upper bound on the KL in the particular case of a Gaussian with known mean but unknown variance $\cN(0,\sigma^2)$ (\S\ref{ssec:gaussian-variance}), illustrating that tight rates are possible even though the current theory does not cover them.
	\item We highlight sufficient conditions to characterize when a (local) quadratic approximation of the KL is valid, offering a partial answer to~\eqref{problem} (\S\ref{ssec:quadratic}-\ref{ssec:local-quadratic}).
	\item By linking MAP and SMD, we show that modern analysis of SMD is yet to prove convergence on barrier objectives such as $-\log$ (\S\ref{sec:optimization}).
\end{itemize}

\paragraph{Notation}
$X$ and $T=T(X)$ are random variables, $x$ is a sample, $n$ is the number of samples and $d= \dim(T)$.
$\langle \cdot , \cdot \rangle$ is the Euclidean scalar product in $\real^d$.

\section{TECHNICAL BACKGROUND}
\label{sec:background}
This section reviews the formalism of exponential families, their duality, a conjugate prior, and the corresponding MAP.
We point the reader towards \citet[Chapter 3]{wainwright2008graphical} for a more detailed introduction.

The density of an exponential family for a sample $x$ is
\begin{equation}
	 p_\nat(x) = p(x|\nat) = \exp( \langle \nat, T(x) \rangle - \logpart(\nat)) \; ,
	 \label{eq:def_expfamily}
\end{equation}
where  $\nat$ is called natural (or primal) parameter.
It is fully specified by 1) $T: \cX \rightarrow \real^d$, the sufficient statistic,
and 2) a base measure $\nu$ on $\cX$.
Since the exponential is positive, $p$ has the same support as $\nu$.
The log-partition function $\logpart$ acts as a normalization term, since
\begin{align}
    \logpart(\nat) = \log \int e^{\langle \nat, T(x) \rangle} \nu(dx) \;.
\end{align}
This simple model encompasses both categorical distributions : $\cX = \{1, \dots, k\}$, $\nu$ uniform and $T(X)$  the one-hot encoding and multivariate normal distributions $\cX=\real^d, \nu$ Lebesgue and $T(X)=(X, X X^\top)$.

For convenience, we focus on steep, regular exponential families with minimal statistic $T$ \citep{barndoffnielsen2014information}.
Then $\logpart$ is a strictly convex function of Legendre type,
and the set $\Theta = \{ \nat \cond \logpart(\nat) < \infty\}$ is open and convex.
When explicit, we write the random variable $T = T(X)$.

{\bf Duality.}
The log-partition function $\logpart$ verifies the two following identities:
\begin{align}
    \nabla\logpart(\nat) &=  \expect[p_\nat]{T(X)} =: \meanp, \label{eq:mirror-map} \\
    \nabla^2 \logpart(\nat) &= \Cov_{p_\nat}[T(X)] > 0,
\end{align}
where $\meanp$ is called the mean (or dual) parameter, which lives in the open convex set $\cM$ equal to the relative interior of the convex hull of $T(\cX)$.
Given that $\logpart$ is strictly convex, its Hessian is positive definite, and its gradient $\nabla \logpart$ is a \textit{bijection} between natural parameters $\nat$ and mean parameters $\m$.
We will write $\m$ or $\nat$ interchangeably depending on the context, being aware that both are linked and represent the same distribution.

We now introduce the convex conjugate (the Fenchel-Legendre transform) of the log-partition function
\aligns{
	\conj(\m) =  \langle \m, \nat \rangle - \logpart(\nat)
	=  \max_{\nat'\in\Theta}  \langle \m, \nat' \rangle - \logpart(\nat')\; ,
}
which is the common notion of \textit{entropy} in information theory.
By Fenchel duality, its gradient is the inverse of the gradient of $\logpart$,  $\nabla\conj=\nabla\logpart^{-1}$, giving
\aligns{
	\nabla\conj \circ \nabla\logpart(\nat) = \nat, \quad \nabla\logpart\circ \nabla\conj(\meanp) = \meanp.
}

{\bf The Bregman Divergence} induced by $\logpart$ measures the discrepancy between two parameters $\nat$ and $\nat_0$,
\begin{align}
    \bregman (\nat ; \nat_0)
    & = \logpart(\nat) - \logpart(\nat_0)
    - \langle \nabla \logpart(\nat_0)  , \nat - \nat_0 \rangle,
    \label{eq:defBregman}
\end{align}
with $\nabla \logpart(\nat_0) = \expect[\nat_0]{T(X)} =: \meanp_0$ the mean parameter associated to $\nat_0$.
In general, Bregman divergences are not symmetric, i.e., $\bregman (\nat ; \nat_0)\neq \bregman (\nat_0 ; \nat)$.

{\bf A Conjugate Prior} for $p(X|\nat)$ is
\begin{align}
    p(\nat)
    &\propto \exp( - n_0 \bregman(\nat ; \nat_0) ) \nonumber \\
    &\propto \exp(n_0 \langle \m_0, \nat \rangle - n_0 \logpart(\nat)),
    \label{eq:def_prior}
\end{align}
where $n_0$ and $\nat_0$ are (hyper)parameters of the prior  \citep{agarwal2010geometric}.
This is an exponential family with sufficient statistics $(\nat ,\logpart(\nat))$ and natural parameter $(n_0 \m_0, -n_0)$.
Intuitively, $n_0$ is the number of fictive data points observed from a distribution with natural parameter $\nat_0$.

{\bf Maximum A Posteriori (MAP).}
Given a dataset $\mD_n =(X_1,\dots,X_n)$, we wish to estimate the maximum of the posterior distribution $p(\nat \cond \mD_n) \propto p(\mD_n|\nat)p(\nat)$.
Plugging in~\eqref{eq:def_expfamily},~\eqref{eq:defBregman} and~\eqref{eq:def_prior} yields
\aligns{
	p(\nat \cond \mD_n)
    \propto \exp(- (n_0+n) \bregman(\nat; \MAPt^\text{MAP}))
}
which reaches its maximum at $\MAPt^\text{MAP}$ such that
\begin{align}
    \nabla \logpart(\MAPt^\text{MAP}) = \MAPm^\text{MAP}
    = \frac{n_0 \meanp_0 + \sum_{i=1}^n T_i}{n_0+n} \; ,
    \label{eq:defMAP}
\end{align}
where $T_i=T(X_i)$.
When $n_0=0$ (no samples from the prior), we recover the MLE~\eqref{eq:defMLE}.
We write $\MAPt$ for the MAP and view the MLE as a particular case.

\section{PROBLEMS FORMULATION}
\label{sec:problem}.

We are now ready to formalize the main problem of this paper. Assume we observe a dataset $\mD_n$ drawn i.i.d. from $p(\cdot \cond\nat^*)$, an exponential family distribution
with parameters $\nat^*$.
We wish to quantify how well the MLE or a MAP approximates the true distribution.

A natural way to quantify this is the Kullback-Leibler divergence (KL) $\KL(p_{\nat^*} || p_\nat)$.
In the well-specified setting, it corresponds to the log-likelihood sub-optimality~\eqref{eq:suboptimalityKL}.
With exponential families, the KL is also a Bregman divergence:
\alignn{
	\KL(p_{\nat^*} || p_\nat)
	 = \bregman(\nat ; \nat^*)
	 = \bregmanconj(\m^* ; \m) \; .
}
The second equality is a general property of Bregman divergences and convex conjugates. 
How does this quantity behave when $\hat \nat$ is the MLE or MAP?
 Or in the words of statistical decision theory, what is the \emph{frequentist risk} of these estimators when the loss is the KL divergence?
This is our first problem.

\begin{problem}[Upper-bounding MAP and MLE]
Upper bound the following quantities:
\begin{align}
	\label{eq:bregmanMLE}
	\text{MLE: } \quad &\expect[\mD_n]{\bregmanconj \left (\E_{\nat^*} [T] ;  \inv{n}  \smallsum_i T_i \right )}, \\
	\label{eq:bregmanMAP}
	\text{MAP: } \quad &\expect[\mD_n]{\bregmanconj \left (\E_{\nat^*} [T] ; \frac{n_0 \m_0 + \smallsum_i T_i}{n_0+n} \right )},
\end{align}
where the expectation is on the data $\mD_n = (T_1, \dots, T_n)$.
\end{problem}

More explicitly, we want an upper bound that does not involve this expectation over the dataset.
Surprisingly, we found no general solution to this seemingly simple problem, whether in the literature or by our means.
In \S\ref{sec:example}, we provide results for special cases such as $\cN(0, \sigma^2)$,
while in \S,\ref{sec:insights} we provide realistic conditions to obtain valid bounds after seeing a large number of samples.
However, we have yet to find a solution encompassing both a broad range of exponential families
and applicable to small sample sizes $n \lesssim d$.

{\bf A Difficulty with the MLE.}
While~\eqref{eq:bregmanMAP} is always finite, \eqref{eq:bregmanMLE} may be infinite,
for instance when estimating the covariance of a Gaussian when $n \leq d + 1$.
Even worse, there is a non-zero probability never to sample one of the categories with categorical variables.
In those cases the MLE gives zero weight to this category and $\KL(p_{\nat^*} \,\Vert\, p_\text{MLE} ) = +\infty$.
Therefore, the expected KL~\eqref{eq:bregmanMLE} is infinite for any number of samples.
Instead of taking the expectation, one might want to bound the risk in high probability
without resorting to Markov inequality, as achieved by~\citep{ostrovskii2021finite},
 but this is a difficult endeavor.
These examples make a case for regularized estimators such as MAP,
for which we may find upper bounds.

\paragraph{Optimization.}
With exponential families, MAP can be linked to \emph{stochastic mirror descent (SMD)}, see App.~\ref{app:SMD}.
More precisely, let us re-write~\eqref{eq:defMAP} as
\alignn{
\m_n = \m_{n-1}- \lr_n (\m_{n-1} - T_n)
\label{eq:mean_update}
}
where $\lr_n := \inv{n_0 + n}$.
Now define stochastic functions $f_X(\nat) = -\log p(X \cond \nat)$ such that $\E[f_X] = f$.
If we further introduce stochastic gradients $g_n(\nat) := \nabla\logpart(\nat) - T_n = \nabla f_{X_n}(\nat)$, then~\eqref{eq:mean_update} becomes
\alignn{
	\nabla\conj(\hat \nat_{n})
	= \nabla\conj(\hat \nat_{n-1}) - \lr_n g_n(\hat \nat_{n-1}),
}
which is the update formula for SMD on $f$ with mirror map $\nabla\logpart$
and step-size schedule $\lr_n$, initialized at $\nat_0$.
In this view, MLE forgets its (arbitrary) initialization after the first step with step size 1.
The observation MAP $\in$ SMD brings us to our second problem.
\begin{problem}[Convergence rate for SMD]
Find a convergence rate for stochastic mirror descent that applies to conjugate MAP of exponential families such as Gaussians $\cN(\meanp, \sigma^2)$.
\end{problem}

To address these problems, we start by investigating simple examples to provide solutions to Problem 1, getting insights into what is achievable.

\section{ILLUSTRATING EXAMPLES}\label{sec:example}
\subsection{Gaussian with Unknown Variance}\label{ssec:gaussian-variance}

A non-trivial yet straightforward example is the centered Gaussian distribution with unknown variance $\cN(0,\sigma^2)$.
Its log-likelihood reads $\log p(x) = -\frac{x^2}{2 \sigma^2} - \half\log(2\pi \sigma^2)$.
Defining $T(X)=X^2$ as the sufficient statistic, we get natural parameter $\nat = -\inv{2 \sigma^2} <0$, and mean parameter $\m=\E[T(X)] = \sigma^2 >0$.
Mean and natural parameters are roughly inverse of each other, i.e., $\nat = -\inv{2 \m}$.
Now we match the log-likelihood with the exponential family template to get the log-partition function, and take the conjugate to find the entropy
\aligns{
	\logpart (\nat) = - \half \log(-\nat) \quad\text{and}\quad
	\conj(\m) = - \half \log(\m)  \; ,
}
up to constants.
Both $A$ and  the entropy are roughly negative logarithm $z\mapsto - \log(z)$.
It means the conjugate prior is the exponential family with sufficient statistic $(\nat, \log(-\nat) )$, e.g., a negative gamma distribution.
It also means $\bregman$ and $\bregmanconj$ have the same shape
\begin{align}
	\bregmanconj( \m^*; \m_n)
	&= \half \left ( \frac{\m^*}{ \m_n} - 1 - \log  \frac{\m^*}{ \m_n} \right) \; .
\end{align}
In Theorems~\ref{thm:varianceMLE} and~\ref{thm:varianceMAP}, we report upper bounds on the expected value of this divergence for the MLE and the MAP.
All proofs for this section are in App.~\ref{app:gaussian-variance}.
\begin{theorem}[MLE Bound]
\label{thm:varianceMLE}
	The MLE of $\cN(0,\m^*)$ is $\hat \m_n^\text{MLE} = \inv{n} \sum_i X_i^2 $.
	Its expected suboptimality is infinite when $n\leq 2$, and otherwise upper-bounded as
	\begin{align}
		 \expect{\bregmanconj( \m^*; \hat \m_n^\text{MLE}) }
			\leq \inv{2n} +\frac{2}{n(n-2)} \; .
			\label{eq:MLE_rate}
	\end{align}
\end{theorem}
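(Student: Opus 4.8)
The plan is to exploit the exact sampling distribution of the MLE, which for a centered Gaussian is a scaled chi-squared variable, and then reduce the expected divergence to two classical expectations that have closed forms.

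First I would identify the law of $\hat \m_n^\text{MLE}$. Since $X_i \sim \cN(0,\m^*)$, each $X_i^2/\m^*$ is a $\chi^2_1$ variable, so $Y := \sum_i X_i^2/\m^* = n\,\hat \m_n^\text{MLE}/\m^* \sim \chi^2_n$. Substituting $\m^*/\hat \m_n^\text{MLE} = n/Y$ into the divergence displayed in the excerpt gives
\[
\bregmanconj(\m^*;\hat \m_n^\text{MLE}) = \half\left(\frac{n}{Y} - 1 - \log n + \log Y\right),
\]
so the entire problem reduces to computing $\E[1/Y]$ and $\E[\log Y]$ for $Y\sim\chi^2_n$.

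Next I would substitute the two standard moments of the chi-squared and inverse-chi-squared laws: $\E[1/Y] = 1/(n-2)$, which is finite only for $n>2$ and infinite otherwise (this is exactly what makes the expected suboptimality infinite for $n\le 2$), and $\E[\log Y] = \psi(n/2) + \log 2$, where $\psi$ is the digamma function. Taking expectations, using $\tfrac{n}{n-2}-1 = \tfrac{2}{n-2}$ and $\log 2 - \log n = -\log(n/2)$, yields the \emph{exact} closed form
\[
\E\!\left[\bregmanconj(\m^*;\hat \m_n^\text{MLE})\right] = \frac{1}{n-2} + \half\left(\psi(n/2) - \log(n/2)\right).
\]

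The final and key step is to upper bound the digamma term. I would invoke the standard sharp inequality $\psi(x) < \log x - \tfrac{1}{2x}$, valid for all $x>0$ (a consequence of the asymptotic expansion $\psi(x)=\log x - \tfrac{1}{2x} - \tfrac{1}{12x^2}+\cdots$, whose first omitted term is negative). With $x = n/2$ this gives $\psi(n/2) - \log(n/2) < -1/n$, hence
\[
\E\!\left[\bregmanconj(\m^*;\hat \m_n^\text{MLE})\right] < \frac{1}{n-2} - \frac{1}{2n} = \frac{1}{2n} + \frac{2}{n(n-2)},
\]
which is precisely the claimed bound. The main obstacle is less in the reduction (mechanical once the chi-squared structure is spotted) than in pinning down the right digamma inequality so that the elementary algebra collapses exactly onto the stated right-hand side; a looser bound on $\psi$ would still recover the correct $1/2n$ rate but not this clean closed expression.
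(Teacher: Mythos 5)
Your proof is correct and follows essentially the same route as the paper: identify the exact sampling law of the MLE (you use $\chi^2_n$ directly, the paper works with the general $\Gamma(n\alpha,n\alpha)$ ratio and specializes $\alpha=\tfrac12$), compute the exact expectation via $\E[1/Y]$ and $\E[\log Y]=\psi(n/2)+\log 2$, and finish with the digamma bound $\psi(x)\leq \log x - \tfrac{1}{2x}$. Your final expression $\tfrac{1}{n-2}-\tfrac{1}{2n}$ indeed equals the stated $\tfrac{1}{2n}+\tfrac{2}{n(n-2)}$, so the two arguments coincide.
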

This upper bound matches the asymptotic result~\eqref{eq:asymptote} that we derive in \S\ref{ssec:asymptote}.
We illustrate its numerical behavior in Figure~\ref{fig:curves}.
With the same technique, we obtain a similar bound for the multivariate generalization: the expected value is infinite whenever $n \leq d+1$ where $d$ is the dimension, and is otherwise bounded by $O\paren{\frac{d^2}{n} + \frac{d^3}{n(n-d-1)}}$.

\begin{theorem}[MAP Bound]
\label{thm:varianceMAP}
The expected suboptimality of the MAP of $\cN(0,\m^*)$ with prior hyper-parameters $(n_0,\m_0)$ is
 \begin{align}
	& \expect{\bregmanconj( \m^*; \hat \m_n^\mathrm{MAP})}
	\leq \begin{cases}
		\inv{2(n_0+1)}  +  b_1 \ \text{if}\ n=1,\\
		\frac{1}{n_0 \frac{\m_0}{\m^*} +n-2} + b_n \ \text{if}\ n\geq 2
	\end{cases}
	\label{eq:MAP_rate}\\
	& \text{where }b_n = \frac{(1 + \inv{n_0} - \frac{\m_0}{\m^*})^2}{2 (\frac{\m_0}{\m^*}+\frac{\max(0,n-2)}{n_0})(1 + \frac{n}{n_0} )} \; . \nonumber
\end{align}
\end{theorem}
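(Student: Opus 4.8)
The plan is to push everything through the chi-squared representation of the estimator and reduce the claim to two scalar moment inequalities. Since $X_i \sim \cN(0,\m^*)$, the rescaled statistic $Z := \tfrac{1}{\m^*}\sum_i X_i^2$ follows a $\chi^2_n$ law, so writing $c := n_0 \m_0/\m^*$ and $N := n_0+n$ we get $\hat\m_n^{\mathrm{MAP}}/\m^* = (c+Z)/N =: R$. Substituting into the closed form of the divergence gives $\bregmanconj(\m^*;\hat\m_n^{\mathrm{MAP}}) = \tfrac12(R^{-1}-1+\log R)$, hence
\begin{equation}
2\,\E\!\left[\bregmanconj(\m^*;\hat\m_n^{\mathrm{MAP}})\right] = N\,\E\!\left[\tfrac{1}{c+Z}\right] - 1 + \E[\log(c+Z)] - \log N. \nonumber
\end{equation}
The first term is a ``variance'' contribution and the last two, once centered, a squared ``bias'' contribution; this is the origin of the two-term shape of the bound.

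The technically central step is the inverse-moment bound: for $n\ge 2$ and $c>0$, one needs $\E[(c+Z)^{-1}] \le (c+n-2)^{-1}$. I would prove it through the Laplace representation $(c+Z)^{-1}=\int_0^\infty e^{-s(c+Z)}\,ds$ and the $\chi^2_n$ moment generating function $\E[e^{-sZ}]=(1+2s)^{-n/2}$, which give $\E[(c+Z)^{-1}]=\int_0^\infty e^{-sc}(1+2s)^{-n/2}\,ds$ while $(c+n-2)^{-1}=\int_0^\infty e^{-sc}e^{-(n-2)s}\,ds$. The integrand difference $(1+2s)^{-n/2}-e^{-(n-2)s}$ is negative for small $s$ and positive for large $s$ with a single sign change, and at $c=0$ its unweighted integral is $\tfrac{1}{n-2}-\tfrac{1}{n-2}=0$; since the weight $e^{-sc}$ is decreasing it up-weights the negative part, so the weighted integral stays $\le 0$. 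I expect this variation-diminishing argument to be the main obstacle, because it must remain valid exactly in the regime $n=2$ (and $n=1$) where the unshifted inverse moment $\E[1/Z]$ is already infinite, so the prior shift $c>0$ is what restores finiteness.

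For the remaining terms I would bound the log moment by concavity, $\E[\log(c+Z)]\le\log(c+n)$, and then linearize with $\log x \le x-1$ to turn $\log\tfrac{c+n}{N}$ into $\tfrac{c-n_0}{N}$. Assembling the three estimates and simplifying yields $\E[\bregmanconj]\le \tfrac{1}{c+n-2}+\tfrac{(n_0-c)(n_0-c+2)}{2(c+n-2)N}$, and completing the square via $(n_0-c)(n_0-c+2)\le(n_0+1-c)^2$ gives exactly $\tfrac{1}{n_0\m_0/\m^*+n-2}+b_n$. The leading term $\tfrac{1}{c+n-2}\approx 1/n$ is loose by a factor two against the $1/2n$ asymptote, consistent with the caption of Fig.~\ref{fig:curves}.

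Finally, the case $n=1$ must be handled separately, since $c+n-2=c-1$ can be nonpositive and the inverse-moment bound above is then invalid; here I would instead use a dedicated upper bound on $\E[(c+Z)^{-1}]$ for $Z\sim\chi^2_1$ that stays finite for all $c>0$, producing the leading term $\tfrac{1}{2(n_0+1)}$, while the same $\log x\le x-1$ step produces $b_1$. As a sanity check, letting $n_0\to\infty$ collapses $\hat\m_1^{\mathrm{MAP}}\to\m_0$, and the quadratic $b_1$ upper-bounds the residual bias $\bregmanconj(\m^*;\m_0)$ precisely through $\m_0/\m^*-1\ge\log(\m_0/\m^*)$.
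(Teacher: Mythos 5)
Your proposal is correct and, despite the different packaging, lands essentially on the paper's proof. Your treatment of the logarithm --- Jensen's inequality $\E[\log(c+Z)]\le\log(c+n)$ followed by $\log x\le x-1$ --- is arithmetically identical to the paper's symmetrization step: discarding $\E[R]-1-\E[\log R]\ge 0$ is exactly replacing $\bregmanconj(\m^*;\MAPm)$ by the symmetrized divergence $\cS_\conj(\m^*,\MAPm)$ (the intermediate Jensen step is superfluous since $\log x\le x-1$ holds pointwise), and it is why your bound, like the paper's, carries the same factor-of-two looseness on the variance term. The subsequent algebra and the completion of the square $(n_0-c)(n_0-c+2)\le(n_0-c+1)^2$ coincide with the paper's. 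The one genuinely different ingredient is your proof of the inverse-moment inequality $\E[(c+Z)^{-1}]\le(c+n-2)^{-1}$ for $Z\sim\chi^2_n$: the paper derives it (for general gamma shapes) by expressing the expectation as a generalized exponential integral and invoking the DLMF bound $e^xE_k(x)\le 1/(x+k-1)$, whereas your Laplace-transform single-crossing argument is self-contained and sound --- the log-difference $h(s)=(n-2)s-\frac{n}{2}\log(1+2s)$ vanishes at $0$ with $h'(0)=-2$ and has a single interior critical point, so the integrand difference changes sign exactly once, and the unweighted integrals both equal $\frac{1}{n-2}$ when $n>2$. Two small caveats. First, that argument requires $n>2$; at $n=2$ the claimed bound degenerates to the trivial $\E[(c+Z)^{-1}]\le 1/c$, which you should state explicitly rather than subsume under the crossing argument. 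Second, your $n=1$ case is under-specified: no ``dedicated'' inverse-moment bound exists or is needed --- the same trivial bound $\E[(c+Z)^{-1}]\le 1/c$ is what the paper's Lemma~\ref{lem:expected-map-natural-parameter-gaussian} uses, and the leading term $\frac{1}{2(n_0+1)}$ then emerges from the same $\pm 1$ rearrangement as in the $n\ge 2$ case, not from a sharper moment estimate.
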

Anticipating on \S\ref{ssec:bias-variance}, this inequality highlights an explicit $O(\frac{v}{n} + \frac{b}{n^2})$ variance-bias decomposition.
This inequality is derived with the symmetrized Bregman $\cB(a,b) + \cB(b,a)$ for which calculus is more tractable.
This explains why the variance term is twice larger than the asymptote~\eqref{eq:asymptote}.
Regarding the bias, it vanishes when $\frac{\m_0}{\m^*} =1 + \inv{n_0} $, which happens when the prior is slightly larger than the ground truth.
This correlates well with our numerical observations (cf App.~\ref{app:gaussian-variance}).

Note that if $X\sim \cN(0,\sigma^2)$, then $X^2 \sim \Gamma\paren{\half, \inv{2\sigma^2}}$ in the shape-rate parametrization of Gamma distributions. In fact the bounds above can be generalized to any distribution $\Gamma\paren{\alpha, \beta}$ with known shape $\alpha$.
This generalization encompasses exponential distribution when $\alpha=1$, as another important special case.
 We postpone these rates to \cref{app:gaussian-variance} for the sake of clarity.

\begin{figure*}[t]
	\centering
	\includegraphics[width=.8\textwidth]{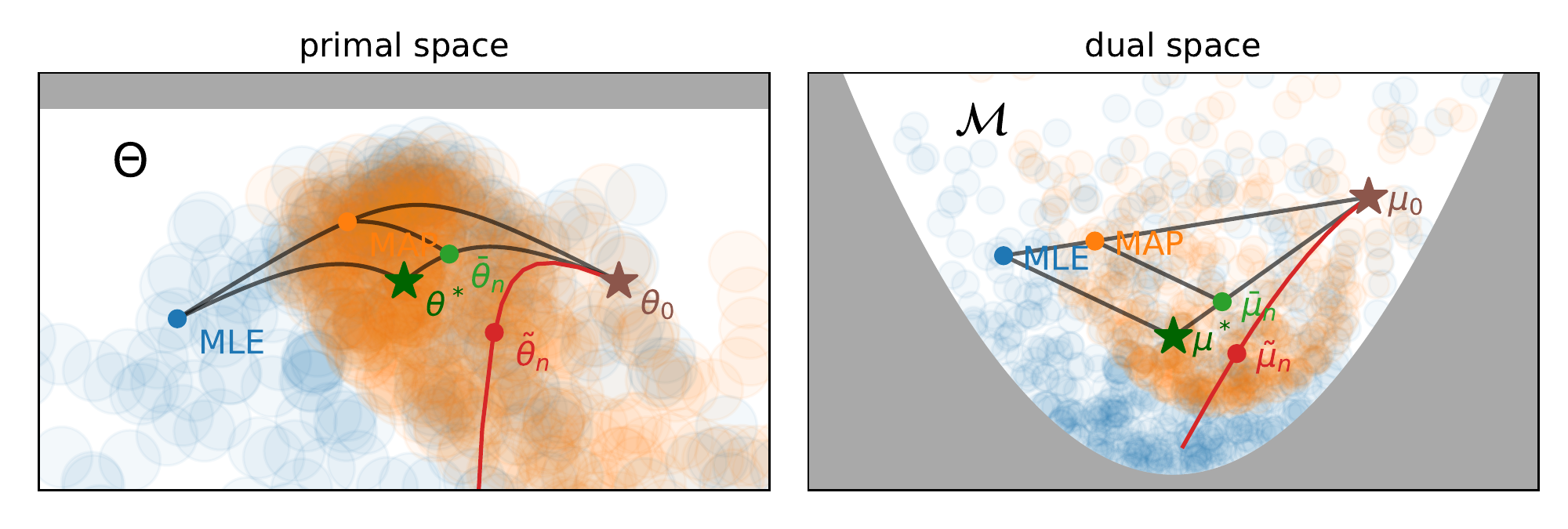}
	\caption{
	Primal and dual representations of a Gaussian $\cN(m,\sigma^2)$ MAP (blue) and MLE (orange) (\S\ref{ssec:gaussian} with $n=3$).
	In dual space, MAP is a scaled version of the MLE~\eqref{eq:defMAP} with expectation $\E[\hat\m_n^\text{MAP}]=:\bar \m_n$ (light green), and MLE is unbiased $\E[\hat\m_n^\text{MLE}]=\m^*$, as illustrated by the parallels in the grey triangle.
	In primal space, MAP has expectation $\tilde \nat_n$ (red), which intervenes in the bias-variance decomposition~\eqref{eq:bias-variance} from~\S\ref{ssec:bias-variance}.
	The hyperparameter of the prior $\nat_0$ controls the brown point's location while varying $n_0$ spans the long edges of the triangle and the red curve.
	Large blurry circles in the background are other instances of MAP and MLE revealing their distribution.
	}
	\label{fig:thales}
\end{figure*}

\subsection{Full Gaussian (Non-Trivial)}
\label{ssec:gaussian}
Now that we have solved the case of $\cN(0,\sigma^2)$, consider the full Gaussian $\cN(m,\sigma^2)$, which offers a highly non-trivial example for Problem 1.
Their log-likelihood reads $p(x) = -\frac{(x-m)^2}{2 \sigma^2} - \half \log(2\pi\sigma^2)$.
With sufficient statistic $T(x)=(x, x^2)$,
the mean parameters are $\m = \E[T(X)] = (m , m^2 + \sigma^2)$ belonging to the open set $\cM= \{(u,v) \cond u^2 < v\}$,
and the natural parameters are $\nat= (\frac{m}{\sigma^2} , \frac{-1}{2\sigma^2}) \in \Theta = \real \times \real_-$.
Examples of MAP and MLE  are represented in \cref{fig:thales} within $\cM$ and $\Theta$ delimited in grey.
Given these parameters, log-partition and entropy are, up to constants,
\alignn{
	\textstyle \logpart(\nat) &= \textstyle \frac{\nat_1^2}{-4\nat_2} - \half \log(-\nat_2) \\ 
	\textstyle \conj(\m) &= \textstyle - \half \log (\mu_2 - \mu_1^2)
}

These functions are neither smooth, nor strongly convex, but they are self-concordant, since $\conj$ is  the logarithmic barrier of a quadratic domain
\citep[p.177, example 4.1.1.4]{nesterov2003introductory}, and self-concordance is preserved by convex-conjugacy~\citep{nesterov1994interior} -- see more details in App.~\ref{app:gaussian}.
We now discuss the general problem and some ways to solve it via direct expansions of the Bregman divergence.

\section{PARTIAL SOLUTIONS}
\label{sec:insights}

\subsection{Asymptotic Rate}
\label{ssec:asymptote}
As a reference point for any finite convergence rate, it is interesting to briefly review the classical asymptotic behavior of these quantities as $n \rightarrow +\infty$.
Proofs are in App.~\ref{app:asymptote}, and \citet[\S1.1]{ostrovskii2021finite} offers a more comprehensive review.

Statistics typically give results on $\nat$, but the MAP~\eqref{eq:defMAP} is more simply expressed with $\meanp$, so let us focus on $\bregmanconj$.
Bregman divergences are locally quadratic, as seen via a second order Taylor expansion
\alignn{
    \textstyle \bregmanconj(\m^* ; \m)
    &\textstyle = \frac{1}{2}\norm{\m^* - \m}^2_{\mF}
    + O(\norm{\m - \m^*}^3),
    \label{eq:bregmanTaylor}
}
where the Mahalanobis norm  $\| x \|_{\mF}^2 = x^\top \mF x$  is induced by $\mF  := \nabla^2\conj(\m^*)$, the Hessian of the entropy at the optimum. It happens that  $\mF$ is also the inverse \textit{Fisher information matrix} at $\nat^*$, since
\aligns{
    \mF
    :=\nabla^2\conj(\m^*)
    = \nabla^2\logpart(\nat^*)^{-1}
    = \Cov_{\nat^*}[T(X)]^{-1}  \; .
}
Plugging the MLE~\eqref{eq:defMLE} or MAP~\eqref{eq:defMAP} into~\eqref{eq:bregmanTaylor}, we have
\begin{align}
	\label{eq:asymptote}
	\E \bregmanconj \left (\E [T(X)] ; \hat \meanp_n^\text{MLE/MAP} \right )
	= \frac{d}{2n} + O(n^{- \frac{3}{2}}) \; .
\end{align}
Both MLE and MAP have the same asymptote, as the contribution of the prior $n_0 \meanp_0$ gets negligible for large $n$.
This asymptote is independent of the optimum $\meanp^*$ or $\mF$ for well-specified models.
Next, we give another example for which we get a rate matching~\eqref{eq:asymptote}.

\subsection{Quadratic Case}
\label{ssec:quadratic}
As another classical reference point, we consider the case $\logpart(\nat) = \half \norm{\nat}_2^2$.
For instance, this is the log-partition of a Gaussian with known variance $I$,
\[
	\cX=\real^d,\quad \nu(dx) = \exp\paren{\textstyle \half[-\|x\|^2]} dx,\quad T(x)=x.
\]
In this case, $\conj(\meanp) = \half \norm{\meanp}_2^2$ as well, and both Bregman divergences are squared $\ell^2$ distances since
\begin{align}
	\bregmanconj(\meanp^* ; \meanp) = \half \norm{\meanp^* -  \meanp }_2^2  \; .
\end{align}
Thanks to the independence of samples, we can break down the MLE into individual point's contributions:
\begin{align}
	\expect{\half \norm{\m^* -  \inv{n}  \smallsum_i T_i}_2^2}
	=\frac{\Var(T)}{2n}
	=\frac{d}{2n}.
\end{align}
Adding a reference mean $\m_0$ to get the MAP yields
\begin{align}
		\!\!\!\!\! \expect{\half \norm{\m^* -   \MAPm^\text{MAP}}_2^2} \!
	&= \frac{n \Var(T) +  n_0^2 \norm{\m^* -  \m_0}^2}{2(n+n_0)^2}.\!
	\label{eq:MAP_quadratic}
\end{align}
We see here a variance term defining the $\frac{d}{2n}$ asymptote and a bias term in $O(n^{-2})$. However, this result does not generalize well to other families unless we make restrictive assumptions on $\conj$. 

{\bf If $\conj$ is $L$-Lipschitz} (e.g. $\logpart$ is defined within the $\ell^2$-ball of radius $L$), then
\begin{align}
    \bregmanconj(\m^* ; \m)
    &\leq L \norm{\m^* - \m} + \norm{\nat} \norm{\m^* - \m} \\
    &\leq 2L \norm{\m^* - \m},
\end{align}
so $\bregmanconj$ is Lipschitz, and~\eqref{eq:MAP_quadratic} yields a $O(\inv{\sqrt{n}})$ rate, but no common exponential families verify the assumption.

{\bf If $\conj$ is $L$-smooth}\footnote{$\conj$ is $L$-smooth iff $\nabla\conj$ is $L$-Lipschitz.} (e.g. $\logpart$ is $\frac{1}{L}$-strongly convex \citep{kakade2009duality}), then
\begin{align}
    \bregmanconj(\m^* ; \m)
    \leq \frac{L}{2} \norm{\m^* - \m}^2,
\end{align}
so $\bregmanconj$ is upper bounded by a quadratic, and we get~\eqref{eq:MAP_quadratic} as an upper bound.
It is also possible to get (more complex) upper bounds under restricted notions of strong-convexity \citep{negahban2012unified}.
Besides the Gaussian with known variance, the problem is that no standard exponential family has a \textit{globally} strongly convex log-partition function. The next section focuses on \textit{local} quadratic behavior, which is more realistic.

\subsection{Locally Quadratic Case}
\label{ssec:local-quadratic}
From the Taylor expansion~\eqref{eq:bregmanTaylor},
we know that all Bregman divergences are locally quadratic.
Under some assumptions, such as self-concordance\footnote{
In 1d, $f$ is self-concordant iff $\forall x, \abs{f'''(x)} \leq 2 \abs{f''(x)}^{\frac{3}{2}}$.
} of $\conj$ \citep[Ch.~4.1]{nesterov2003introductory}, we can quantify when this quadratic behavior kicks in. Proofs for this subsection are in App.~\ref{app:self-concordant}.
\begin{proposition}
	\label{prop:selfConcordant}
Let $\conj:\cM\rightarrow \real$ be a self-concordant convex function, $\m, \m^* \in\cM$ and $\mF = \nabla \conj(\m^*)$. Then\footnote{$0.21$ is a value of $x$ such that $x^2 \geq -\frac{x}{1-x} - \log(1 - \frac{x}{1-x})$.}
\aligns{
	\norm{\m^*-\m}_{\mF} < 0.21
	\implies
	\bregmanconj(\m^*,\m) \leq \norm{\m^*-\m}_{\mF}^2.
}
\end{proposition}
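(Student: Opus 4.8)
The plan is to apply the standard self-concordance calculus of \citet[Ch.~4.1]{nesterov2003introductory}, the only real subtlety being a bookkeeping one: the local norm in the statement is measured at $\m^*$ (it is built from $\mF=\nabla^2\conj(\m^*)$), whereas the divergence $\bregmanconj(\m^*;\m)$ is anchored at the \emph{base} point $\m$. Throughout I write $r:=\norm{\m^*-\m}_{\mF}$ and introduce the auxiliary function $\omega_*(t):=-t-\log(1-t)$, defined and increasing on $[0,1)$, which controls self-concordant upper bounds. I also use $\norm{h}_{\nabla^2\conj(z)}:=\sqrt{h^\top\nabla^2\conj(z)\,h}$ for the local norm at a point $z$, so that $\norm{\cdot}_{\mF}=\norm{\cdot}_{\nabla^2\conj(\m^*)}$.

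First I would record the two textbook inequalities for a (non-degenerate, as guaranteed here by strict convexity of the entropy) self-concordant $\conj$. \emph{(i) Hessian self-bounding.} Whenever $\norm{z'-z}_{\nabla^2\conj(z)}<1$, every $h$ obeys $\norm{h}_{\nabla^2\conj(z')}\le(1-\norm{z'-z}_{\nabla^2\conj(z)})^{-1}\norm{h}_{\nabla^2\conj(z)}$. \emph{(ii) Bregman upper bound.} Under the same condition, $\bregmanconj(z';z)\le\omega_*(\norm{z'-z}_{\nabla^2\conj(z)})$.

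Next I would resolve the norm mismatch. Applying (i) with base point $z=\m^*$ and $z'=\m$ and $h=\m^*-\m$ — legitimate because $\norm{\m-\m^*}_{\mF}=r<1$ — converts the norm at $\m^*$ into the norm at $\m$: $\norm{\m^*-\m}_{\nabla^2\conj(\m)}\le\frac{r}{1-r}$. Since $r<0.21$ we have $\frac{r}{1-r}<1$, so I may invoke (ii) with base point $\m$ and use monotonicity of $\omega_*$ to get
\[
\bregmanconj(\m^*;\m)\;\le\;\omega_*\!\big(\norm{\m^*-\m}_{\nabla^2\conj(\m)}\big)\;\le\;\omega_*\!\Big(\frac{r}{1-r}\Big).
\]

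It then remains to verify the scalar inequality $\omega_*\!\big(\tfrac{r}{1-r}\big)\le r^2$ for $0\le r<0.21$, which is exactly the transcendental condition pinning down the constant $0.21$ in the footnote; a short monotonicity argument (or direct evaluation, the bound being essentially tight near $r=0.21$) closes the proof. I expect the only genuine work to be this last scalar estimate together with keeping the two local norms straight — everything upstream is a direct invocation of (i)–(ii). As a remark, one can bypass the norm conversion altogether by writing the divergence in integral-remainder form $\int_0^1(1-t)\,\norm{\m^*-\m}^2_{\nabla^2\conj(\m+t(\m^*-\m))}\,dt$ and bounding the integrand via (i) with base point $\m^*$; the resulting integral evaluates to the sharper $\bregmanconj(\m^*;\m)\le\frac{r}{1-r}+\log(1-r)$, which yields the same conclusion for a slightly larger admissible threshold.
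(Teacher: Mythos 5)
Your proof is correct and follows essentially the same route as the paper's: invoke Nesterov's norm-conversion bound to pass from the local norm at $\m^*$ to the one at $\m$, apply the self-concordant upper bound $\bregmanconj(\m^*;\m)\leq\omega^*(\norm{\m^*-\m}_{\nabla^2\conj(\m)})$, and close with the scalar inequality $\omega^*\bigl(\tfrac{r}{1-r}\bigr)\leq r^2$ for $r\leq 0.21$. The integral-remainder variant you mention at the end is a nice sharpening, but the main argument is the paper's argument.
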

To gain insights into how many samples are needed, we can estimate when $\expecti{\norm{\m^*-\m}_{\mF}} < 0.21 $.
For the MLE, the proof of~\eqref{eq:asymptote} from~\eqref{eq:bregmanTaylor} yields $\expecti{\norm{\m^*-\hat \m_n}^2_{\mF}} = \frac{d}{n}$ in general, so a sufficient condition is $n \geq 25 d$.
For MAP, transforming~\eqref{eq:MAP_quadratic}, we get the sufficient condition $n\geq 25d + 5 \norm{\m^* -  \m_0} - n_0$.
This means that on average, we need $25$ times more samples than the dimension to reach the quadratic regime and ensure an upper-bound like~\eqref{eq:MAP_quadratic}.

The entropy $\conj$ is self-concordant for several common families such as Gaussians (\S\ref{ssec:gaussian}), and all families with $\logpart \approx -\log$ such as
exponential distributions,
Laplace with known mean,
Pareto with known minimum value,
or Weibull with known shape $k$.
The entropy is also self-concordant when $T$ lives in a compact \citep{bubeck2015entropic} -- e.g., categorical and Dirichlet distributions.
Precisely, categorical variables illustrate that Proposition~\ref{prop:selfConcordant} does not imply directly a bound on the \emph{expected} Bregman.
The expected KL of the categorical MLE is always infinite, as previously mentioned in \S\ref{sec:problem}.

Overcoming this limitation,
\citet{ostrovskii2021finite} characterize the number of samples needed to be upper bounded by a quadratic \emph{with high-probability}, for any parametric models with a self-concordant log-likelihood $f$.
\citet{anastasiou2017bounds} obtains a similar flavor of result under other assumptions on the third derivative of $f$.
More closely, in the world of exponential families, \citet{kakade2010learning} prove a result similar to Proposition~\ref{prop:selfConcordant} from a local bound on all higher-order moments of $\logpart$ in $\nat^*$.
However, these results are expressed with quadratics in $\nat$, not $\m$, and they do not directly translate to convergence rates for the MAP, but they might with some more work.

More generally, the present proposition and these related works answer~\eqref{problem} only \emph{partially}, as they all give \textit{large sample} results, that hold when $n\geq N$ for some constant $N$. 
A full solution to~\eqref{problem} would apply to small $n$.
Informed by the properties that we have seen so far, we next investigate a general decomposition of the Bregman that could guide us towards a solution.

\subsection{Bias-Variance Decomposition}
\label{ssec:bias-variance}
In both the quadratic~\eqref{eq:MAP_quadratic} and the Gaussian variance examples~\eqref{eq:MAP_rate}, the upper bound takes the form $O(\inv{n}) + O(\frac{\text{bias}}{n^2})$,
giving us a flavor of what we would like as a general result for exponential families: a finite sample convergence rate, with variance and bias terms that reflect the important constants of the problem.
Such a decomposition exists for any Bregman divergence \citep[Theorem 0.1]{pfau2013generalized}.
\begin{theorem}[Bregman Bias-Variance Decomposition]
	Let $\tilde \theta_n := \expecti{\hat \theta_n}$ be the expectation of the MAP in primal space, and $\tilde \m_n = \nabla \logpart(\tilde \theta_n )$ be its dual representation. The  expected Bregman decomposes into
\begin{equation}
	\expect{\bregmanconj(\m^* ; \hat \m_n)} = {\bregmanconj(\m^* ; \tilde \m_n)}
	+ {\expect{\bregmanconj(\tilde \m_n ; \MAPm)}}
	\label{eq:bias-variance}
\end{equation}
\end{theorem}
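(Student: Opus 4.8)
The plan is to prove the identity by directly expanding all three Bregman divergences via their definition~\eqref{eq:defBregman} (written in the conjugate form, with $\conj$ in place of $\logpart$), taking expectations, and checking that the two sides agree term by term. The whole argument hinges on two duality facts. First, for every realization of the data, $\nabla\conj(\MAPm) = \MAPt$, since $\nabla\conj = \nabla\logpart^{-1}$ and $\MAPm = \nabla\logpart(\MAPt)$. Second, the dual point $\tilde\m_n$ is defined precisely so that $\nabla\conj(\tilde\m_n) = \tilde\nat_n = \expect{\MAPt}$. In words, $\tilde\m_n$ is the dual image of the \emph{primal} expectation, and not the expectation $\expect{\MAPm}$ of the dual parameter; this distinction is exactly what makes the decomposition exact.

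First I would expand the left-hand side using the first identity, obtaining $\bregmanconj(\m^*;\MAPm) = \conj(\m^*) - \conj(\MAPm) - \lin{\MAPt, \m^* - \MAPm}$. Taking the expectation and using linearity together with $\expect{\MAPt} = \tilde\nat_n$ collects everything into $\conj(\m^*) - \expect{\conj(\MAPm)} - \lin{\tilde\nat_n, \m^*} + \expect{\lin{\MAPt,\MAPm}}$. Next I would expand the two right-hand terms. The bias term $\bregmanconj(\m^*;\tilde\m_n)$ is deterministic and, via the second identity, equals $\conj(\m^*) - \conj(\tilde\m_n) - \lin{\tilde\nat_n, \m^* - \tilde\m_n}$. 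The variance term expands, again using the first identity, to $\expect{\bregmanconj(\tilde\m_n;\MAPm)} = \conj(\tilde\m_n) - \expect{\conj(\MAPm)} - \lin{\tilde\nat_n, \tilde\m_n} + \expect{\lin{\MAPt, \MAPm}}$, where $\expect{\MAPt} = \tilde\nat_n$ was applied to the linear term in the fixed argument $\tilde\m_n$. Summing the two right-hand terms, the $\pm\conj(\tilde\m_n)$ and $\pm\lin{\tilde\nat_n,\tilde\m_n}$ contributions cancel, leaving exactly the expression obtained for the left-hand side, which establishes the claim.

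The step I expect to be delicate is not the algebra, which cancels cleanly, but recognizing \emph{why} the correct centering point is $\tilde\m_n = \nabla\logpart(\expect{\MAPt})$. The cross terms vanish only because the expected dual gradient $\expect{\nabla\conj(\MAPm)} = \expect{\MAPt}$ coincides with $\nabla\conj(\tilde\m_n)$; any other candidate center, such as $\expect{\MAPm}$, would leave a residual linear term and break the identity. A secondary, purely technical point is integrability: one must assume $\expect{\conj(\MAPm)}$ and $\expect{\lin{\MAPt,\MAPm}}$ are finite so that the expectations may be split across terms. This holds for the MAP (with $n_0>0$), which stays in the interior of $\cM$, but can fail for the MLE, consistent with the difficulty already noted for~\eqref{eq:bregmanMLE}.
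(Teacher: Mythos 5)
Your proof is correct. The paper does not prove this statement itself---it cites \citet[Theorem 0.1]{pfau2013generalized}---and your direct expansion of the three divergences, with the cancellation driven by $\nabla\conj(\tilde\m_n)=\tilde\nat_n=\expecti{\MAPt}$, is exactly the standard argument behind that cited result; your remarks on why the centering point must be the dual image of the \emph{primal} expectation, and on integrability for the MLE, are both accurate and worth keeping.
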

We plot this decomposition for $\cN(\mu, \sigma^2)$ in Fig.~\ref{fig:gaussian_decomposition} , and we illustrate the primal mean $\tilde \nat_n$ in  Fig.~\ref{fig:thales}.

\textbf{Remark:} In this decomposition, the primal expectation $\expecti{\hat \theta_n}$ is the reference point.
An estimator will be unbiased if $\tilde \nat_n = \nat^*$.
This is not true for the MLE, which is unbiased w.r.t. the dual parameter $\expecti{\hat \m_n}=\m^*$.

We show in App.~\ref{app:bias-variance} that the bias decreases like ${\bregmanconj(\m^* ; \tilde \m_n)} \leq \frac{2}{n(n-2)}$ for Gaussian variance MLE, and
 ${\bregmanconj(\m^* ; \tilde \m_n)} \leq \frac{ \norm{\m^* - \m_0}^2 }{(1 + \frac{n}{n_0})^2 }$ for a quadratic MAP.
These observations hint towards a general $O(1/n^2)$ upper bound for the bias, while the variance may be less dependent on the initialization $\nat_0$.

\begin{figure}[t]
	\centering
	\includegraphics[width=.4\textwidth]{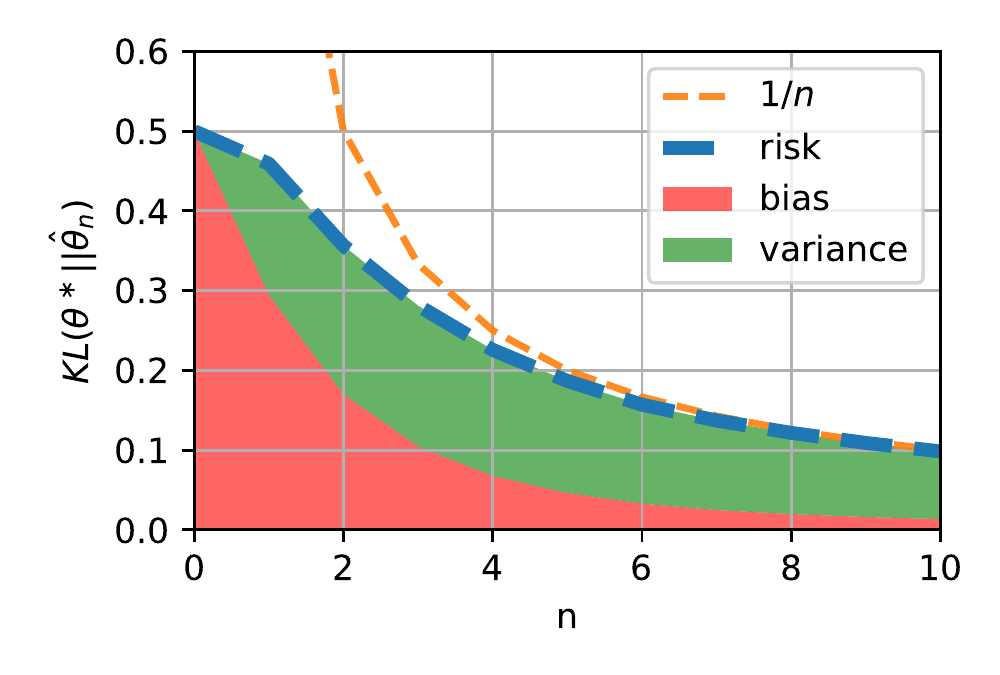}
	\caption{
	Bias-Variance Decomposition for a Gaussian $\cN(m, \sigma^2)$ with $\meanp^*=(0, 1), \meanp_0 = (1,2)$ and $n_0=1$. The asymptote is $\frac{1}{n}$.
	}
	\label{fig:gaussian_decomposition}
\end{figure}

In this section, we considered direct expansions of~\eqref{eq:bregmanMAP}.
None of them could fully solve~\eqref{problem}.
Next, we investigate whether an optimization approach could solve it.

\section{AN OPTIMIZATION PROBLEM}
\label{sec:optimization}

As we saw in \S\ref{sec:problem}, MAP  can be interpreted as stochastic mirror descent (SMD).
This means that \textbf{1)} we may obtain a convergence rate for MAP from an optimization analysis, and \textbf{2)} any insights gained from MAP may inform other designs and analyses of SMD.
In particular, we know that MAP converges asymptotically as $O(n^{-1})$, so we hope to find a convergence rate for SMD that could capture this behavior. 
We first review the assumptions of relative smoothness, helpful to deal with non-smooth functions, before investigating recent analyses of SMD with the MAP.

\subsection{Relative Smoothness}
Mirror descent (MD) \citep{nemirovski1983problem,beck2003mirror}, also known as
Bregman (proximal) gradient, relative gradient descent or NoLips,
and SMD \citep{nemirovski2009robust,ghadimi2012optimal}
are typically encountered in non-smooth (online) optimization,
under bounded (or Lipschitz) gradient assumption on the objective $f$
and strong convexity assumption on the potential $\logpart$
\citep[Th. 4.2(MD) \& Th. 6.3(SMD)]{bubeck2015convex}.
In our case, these assumptions do not hold.
For instance $\logpart = -\log$ is neither smooth nor strongly convex.

Recently, these assumptions have been relaxed to the $\stgcvx$-strong convexity and $\smooth$-smoothness of $f$
\emph{relative} to a reference function $\logpart$, defined as
\aligns{
	\stgcvx \cB_{A}(x, y)
	\leq
	\cB_f(x,y)
	\leq
	\smooth \cB_A(x,y) \; .
}
When $\logpart = \norm{\cdot}^2$, we recover the standard smoothness and gradient descent.
These conditions ensure the linear convergence of MD with mirror map $\nabla A$
\citep{birnbaum2011distributed, bauschke2017descent, lu2018relatively},
even when $f$ is not smooth, and $\logpart$ not strongly convex.

For exponential families, MAP perfectly fits into this framework, as
\aligns{
	f(\theta) = A(\theta) - \expect{\lin{T(X), \theta}}
}
is $1$-smooth and $1$-strongly convex relative to $A$.
Our goal is then to find an applicable convergence rate for SMD under relative smoothness.

\subsection{Bounding the Randomness}

To analyze stochastic algorithms, one also needs to quantify the randomness of stochastic gradients $g(\nat)$.
While many assumptions exist for SGD \citep[\S3 for a modern review]{khaled2020better}, only a few have been adapted to SMD with relative smoothness \citep{hanzely2018fastest, dragomir2021fast, dorazio2021stochastic}, but they have so far been lacking concrete examples.
We review these analyses in the light of the MAP and provide a summary in \cref{tbl:assumptions}.

\begin{table}[t]
	\newcommand*{\greencmark}{\textcolor{Green}{\cmark}}
	\newcommand*{\redxmark}{\textcolor{Red}{\xmark}}
	\caption{Summary of results for SMD
		under relative smoothness and relative strong convexity assumptions.
		Each row correspond to one analysis, and each columns answers one question.
		($-\log$) does the bound hold for the Gaussian variance example (\S\ref{ssec:gaussian-variance})?
		($\lr_n \sim \inv{n}$) does it converge with a $O(\inv{n})$ step-size?
		($f$) is the bound in function value, or in reverse Bregman $\bregman(\nat^*,\hat\nat_n)$?
		($\hat\nat_n$) is it for the last iterate or an average ?
		None of these analysis check all the boxes needed to address~\eqref{problem}.
	}
	\begin{center}
		\begin{tabular}{lcccc}
			\toprule
			Boundedness & $-\log$ &  $\lr_n \sim \inv{n}$ & $f$ & $\hat\nat_n$ \\
			\midrule
			Variance on $\Theta$~\eqref{eq:hanzely} 
			& \redxmark & \greencmark & \greencmark  & \redxmark
			\\
			Variance at $\theta^*$~\eqref{eq:dragomir} 
			& \redxmark & \greencmark & \redxmark  & \greencmark
			\\
			Optimization gap~\eqref{eq:dorazio} 
			& \greencmark & \redxmark & \redxmark & \greencmark
			\\
			\bottomrule
		\end{tabular}
	\end{center}
	\label{tbl:assumptions}
\end{table}


\subsubsection{Analogs of the Variance}
Let us introduce the symmetrized Bregman induced by $\conj$, written $\cS_\conj(\m_1, \m_2) = \bregmanconj(\m_1, \m_2) + \bregmanconj(\m_2, \m_1)$.
\citet{hanzely2018fastest} assume that the expectation of $\cS$ between stochastic and deterministic updates verifies
\alignn{
	\expect[g]{\cS_\conj \paren{
			\hat\m_{n} - \lr g(\hat \nat_{n}),
			\hat\m_{n}   -\lr \nabla f(\hat \nat_{n})
	}} \leq \lr^2 C
	\label{eq:hanzely}
}
for all possible iterates $\MAPt$, relevant step-sizes $\lr$ and for some constant $C$.
When $A(\theta) = \frac{1}{2}\norm{\theta}^2$,
this definition recovers the variance of the stochastic gradient
\[
\expect[g\!\!]{\norm{\nabla f(\theta) - g(\theta)}^2}\leq C.
\]
Under this assumption, \citet[Lem.4.8]{hanzely2018fastest} prove a $O(1/n)$ convergence rate on function values with $O(1/n)$ step-sizes and tail averaging \citep{lacostejulien2012simpler} in primal space $\Theta$.

\citet{dragomir2021fast} define the weaker assumption
\alignn{
	\expect[\tilde g]{
		\cB_{A^*}(\MAPm - 2\lr g(\theta_*), \MAPm)
	} \leq 2 \lr^2 C \; .
	\label{eq:dragomir}
}
When $A(\theta) = \frac{1}{2}\norm{\theta}^2$,
we recover the variance of the gradients at the optimum
\[
\expect[g\!\!]{\norm{\nabla g(\theta^*)}^2}\leq C.
\]
Using their descent lemma \citep[Eq. (12)]{dragomir2021fast} with the $O(1/n)$ step-size used by \citet[Th. 3.2]{gower2019sgd} for SGD, we obtain a $O(1/n)$ convergence rate, on the Bregman with \emph{reversed} arguments $\bregman(\nat^*, \hat\nat_n)$.

These two analyses seem promising for~\eqref{problem}, but none of these assumptions hold in front of barrier objectives such as the $-\log$ from \S\ref{ssec:gaussian-variance}.
Indeed, they both assume their bound holds uniformly for every possible iterate $\hat \nat_n$.
Yet $\cN(0,\sigma^2)$ has a positive mass around $0$.
This means that $\hat \m_n$ can get arbitrarily close from $0$, where the $-\log$ is unbounded, along with the associated  Bregman divergences~\eqref{eq:hanzely} and~\eqref{eq:dragomir}.
In general, this uniform bound over $\hat \m_n$ cannot hold for \emph{barrier} objectives -- functions exploding to infinity in some finite point of space.

Both of their proofs hold if we add an expectation over $\MAPm$ to their assumption.
However, this is not helpful, as verifying the assumption becomes as hard as the initial problem.
For instance, the expectation of~\eqref{eq:hanzely} over $\MAPm$ is an upper bound on the variance term of~\eqref{eq:bias-variance} (cf App.~\ref{app:bias-variance}).
Confronted with this difficulty, we investigate an alternative definition of variance.

\subsubsection{Bounded Optimality Gap}
Inspired by \citet{loizou2021stochastic}, \citet{dorazio2021stochastic} explore the hypothesis
\alignn{
	\min_\nat f(\nat) - \expect[X]{\min_\nat f_X(\nat)} \leq C,
	\label{eq:dorazio}
}
where $f_X$ is a stochastic estimate of $f = \expecti{f_X}$. In our case $f_X(\nat) = - \log p(X\cond \nat)$.
In other words, this lower bounds the expectation of the minimum of the stochastic estimates.
For probabilistic models, such a bound is finite as soon as the model cannot give infinite density to any data point $x$.
This holds, for instance, for discrete distributions because the probability mass is upper bounded by $1$; however, it rules out many families.
In the case of normal distributions $\cN(m, \sigma^2)$, setting $m=x$ and $\sigma^2 \rightarrow 0$ gets $p_\nat (x) \rightarrow +\infty$,.
We have a similar behavior for gamma distribution with $\alpha = \beta x$ and $\beta \rightarrow +\infty$, or with the beta distribution with $\alpha=\beta \frac{x}{1-x}$ and $\beta \rightarrow +\infty$.
Other counter-examples include inverse Gaussians, log-normal, gamma, inverse gamma.

It is possible to overcome this limitation by treating batches of samples as single samples by averaging sufficient statistics, e.g.,. $Y = \{X_1, \dots, X_k\}$ and $T(Y) = \inv{k}\sum_i T(X_i)$.
For instance, a multivariate normal of dimension $d$ cannot attribute infinite density to $d+1$ samples that are not in an affine subspace.

Overall, \eqref{eq:dorazio} can partially handle barrier objectives, but it fails to account for the step-size $\lr_n = \inv{n_0+n}$, as \citet[Thm.1]{dorazio2021stochastic} only proves linear convergence to a variance ball of size $\frac{C}{\stgcvx}$ under constant step-size.
This is in contrast with~\citet{dragomir2021fast} which can handle decreasing step-sizes but not barrier objectives.
Proving convergence of stochastic mirror descent on barrier loss remains an open problem.

\section{CONCLUSION}
Despite the MLE and MAP estimators in the exponential family being classical and known in statistics for decades, we highlighted in this paper open problems to bound their frequentist risk (the expected KL) in a non-asymptotic way. We reviewed some partial results, such as a large sample analysis that describes how many samples are needed to ensure a locally quadratic regime~\citep{kakade2010learning, ostrovskii2021finite} for which rates are known. We also related this problem to the one of obtaining convergence rates in stochastic optimization, observing that MAP fits the framework of stochastic mirror descent with relative smoothness assumptions.
Nevertheless, none of the current analyses of SMD hold for the MAP, even on a simple family such as $\cN(0,\sigma^2)$, thus revealing an area for progress in non-Euclidean optimization.
In writing this paper, we hope to attract attention to this fundamental problem, leading to progress in both optimization and statistics.

\subsubsection*{Acknowledgments}
We express our gratitude to Reza Babanezhad, Radu Dragomir and Hadrien Hendrikx for their insights on stochastic mirror descent. 
This work was partially supported by the NSERC Discovery Grant RGPIN2017-06936 and the Canada CIFAR AI Chair Program. 
Simon Lacoste-Julien is a CIFAR Associate Fellow in the Learning in Machines \& Brains program.

\bibliographystyle{apalike}
\bibliography{references}

\clearpage
\appendix
\onecolumn

\addcontentsline{toc}{section}{Appendix} 
\part{Appendix} 
\parttoc 

\section{PROOFS FOR GAUSSIAN VARIANCE}
\label{app:gaussian-variance}

In this section, we prove the results mentioned in \S\ref{ssec:gaussian-variance},
and add some context and experimental observations.
As mentioned in the main text, the centered gaussian $\cN(0,\sigma^2)$ has sufficient statistic $T(X)=X^2$ which follows a gamma distribution $\Gamma\paren{\half, \inv{2 \sigma^2}}$. 

In general, if $X$ is part of the exponential family, then $T(X)$ is part of the natural exponential family with the appropriate support and base measure, with the same log-partition function as $X$ up to constants.
MLE and MAP only depend on $T(X)$, not $X$, so their performance only depends on the distribution of $T(X)$.
 
In this section we derive results for samples  from a general gamma distribution $X \sim \Gamma(\alpha,\beta)$ with known shape parameter $\alpha$, but unknown rate parameter $\beta$.
Results for the Gaussian follow by taking $\alpha=\half$,
We also immediately get results for exponential distributions by taking $\alpha=1$.
For instance for the MLE we derive the following theorem:

\begin{theorem}[MLE Upper Bound]
	\label{thm:gammaMLE}
	Consider an exponential family such that $T(X)$ is a gamma $\Gamma(\alpha, \beta)$ with known shape $\alpha$.
	the expected KL between $\mu_*$ and the MLE $\hat\mu_n$	is infinite when $\alpha n\leq 1$ and otherwise upper bounded by
	\begin{align}
		 \expect{\bregmanconj( \mu_*; \hat \mu_n) }
			\leq \inv{2n} + \frac{1}{n(n \alpha-1)}  \; .
	\end{align}
\end{theorem}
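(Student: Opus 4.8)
The plan is to turn the expected Bregman divergence into a closed-form expression involving the digamma function and then control it with a single sharp inequality.

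First I would write down the convex-analytic objects for the family with $T(X)\sim\Gamma(\alpha,\beta)$. Matching the gamma density $\frac{\beta^\alpha}{\Gamma(\alpha)}t^{\alpha-1}e^{-\beta t}$ against the exponential-family template~\eqref{eq:def_expfamily} gives natural parameter $\nat=-\beta$ and log-partition $\logpart(\nat)=-\alpha\log(-\nat)$, hence mean parameter $\m=\nabla\logpart(\nat)=-\alpha/\nat=\alpha/\beta$ and entropy $\conj(\m)=-\alpha\log\m$ up to constants. Differentiating~\eqref{eq:defBregman} for $\conj$ then yields the explicit shape
\[
\bregmanconj(\mu_*;\m)=\alpha\left(\frac{\mu_*}{\m}-1-\log\frac{\mu_*}{\m}\right),
\]
which recovers the formula of \S\ref{ssec:gaussian-variance} at $\alpha=\tfrac12$. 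So I only need the two expectations $\E[1/\hat\mu_n]$ and $\E[\log\hat\mu_n]$.

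Second, I would use that $\hat\mu_n=\inv{n}\sum_i T_i$ averages $n$ i.i.d. $\Gamma(\alpha,\beta)$ variables, so $\sum_i T_i\sim\Gamma(n\alpha,\beta)$ and $\hat\mu_n\sim\Gamma(n\alpha,n\beta)$. Standard gamma moment formulas give $\E[1/\hat\mu_n]=\frac{n\beta}{n\alpha-1}$ whenever $n\alpha>1$ (and $+\infty$ otherwise, which immediately explains the infinite regime $\alpha n\le 1$, specializing to $n\le 2$ for the Gaussian), together with $\E[\log\hat\mu_n]=\psi(n\alpha)-\log(n\beta)$, where $\psi$ is the digamma function. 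Substituting $\mu_*=\alpha/\beta$, the $\beta$- and $\log\beta$-dependence cancels, and I expect to land on
\[
\E\!\left[\bregmanconj(\mu_*;\hat\mu_n)\right]=\alpha\left(\frac{1}{n\alpha-1}+\psi(n\alpha)-\log(n\alpha)\right).
\]

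Finally, the only genuine inequality is the classical digamma bound $\psi(x)\le\log x-\frac{1}{2x}$ for $x>0$ (e.g.\ from Binet's integral representation), applied at $x=n\alpha$; it contributes the $-\tfrac{1}{2n}$ term, and recombining $\frac{\alpha}{n\alpha-1}-\frac{1}{2n}=\frac{n\alpha+1}{2n(n\alpha-1)}=\frac{1}{2n}+\frac{1}{n(n\alpha-1)}$ gives exactly the claimed bound. The main obstacle is handling the $\E[\log\hat\mu_n]$ term: a crude bound such as $\psi\le\log$ would only yield the loose asymptote $\frac{1}{n}$ instead of the correct $\frac{1}{2n}$, so recovering the right leading constant hinges on the sharp $\frac{1}{2x}$ correction to the digamma. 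Everything else is routine bookkeeping.
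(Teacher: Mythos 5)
Your proposal is correct and follows essentially the same route as the paper's proof: reduce to the exact closed form $\alpha\left(\frac{1}{n\alpha-1}+\psi(n\alpha)-\log(n\alpha)\right)$ via the gamma distribution of $\hat\mu_n$ and the known inverse-gamma and log-gamma moments, then apply the digamma bound $\psi(x)\leq\log x-\frac{1}{2x}$ at $x=n\alpha$. The only cosmetic difference is that the paper first normalizes to the ratio $\hat\mu_n/\mu_*\sim\Gamma(n\alpha,n\alpha)$ rather than letting $\beta$ cancel at the end, and it justifies $\E[\log]$ of a gamma via the sufficient-statistic identity~\eqref{eq:mirror-map}; your identification of the sharp $\frac{1}{2x}$ correction as the crux matches the paper exactly.
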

	
To obtain the result for Gaussian variance (see Theorem~\ref{thm:varianceMLE}), it suffices to set $\alpha=\half$ in Theorem~\ref{thm:gammaMLE}.

In this section, we review useful properties of the gamma distribution and associated Bregman divergence in \S\ref{app:gamma}. 
Then we prove theorem~\ref{thm:gammaMLE} in \S\ref{app:gammaMLE}.
Then we prove an extension of theorem~\ref{thm:varianceMLE} in \S\ref{app:multivariateMLE}, and prove a useful lemma about the expectation of the natural parameter of the MAP in \S\ref{app:nat-bound}, in order to prove upper bounds for the MAP in \S\ref{app:MAP-bound}.
Finally we numerically investigate the effect of prior hyper-parameters in \S\ref{app:prior-choice}.

\subsection{Gamma Distribution}
\label{app:gamma}
The density of $\Gamma(\alpha,\beta)$ reads
\alignn{
	p(x) = \frac{\beta^\alpha}{\Gamma(\alpha)} x^{\alpha-1} e^{-\beta x} \; .
}
When $\alpha$ is known, it can be cast as an exponential family~\eqref{eq:def_expfamily} with sufficient statistic $T(x)=x$, domain $\cX = \real_+$ and base measure $\nu(x)\propto x^{\alpha-1}$. Then the natural parameter is $\nat = -\beta < 0$ and the log-partition function is
\alignn{
	\logpart (\nat) = -\alpha \log(-\nat)  + \log\Gamma(\alpha) \; .
}
From there we find that the mean parameter is $\mu = \frac{\alpha}{-\nat} >0$ and the entropy has the same form as the log-partition $\conj(\mu) = - \alpha \log(\mu) + \cst$.
This means that the primal and dual Bregman divergences have the same form as well
\begin{align}
	\bregmanconj(\m_*; \m_n) 
	&= \alpha \left ( \frac{\m_*}{ \m_n} - 1 - \log  \frac{\m_*}{ \m_n} \right) 
	= \alpha \phi\paren{\frac{\m_*}{ \m_n}},
	\\
	\bregman( \nat_n; \nat_* ) 
	&=  \alpha \left ( \frac{ \nat_n}{\nat_*} - 1 - \log  \frac{ \nat_n}{\nat_*} \right)
	= \alpha \phi\paren{\frac{ \nat_n}{\nat_*}},
\end{align}
\begin{wrapfigure}[14]{r}{0.4\textwidth}
	\centering
	\includegraphics[width=.38\textwidth]{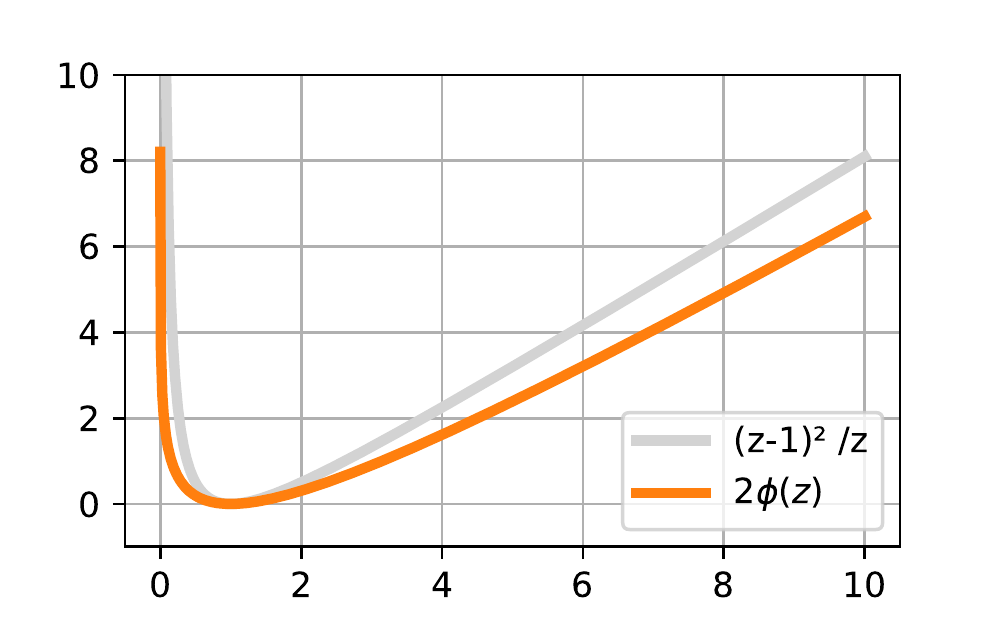}
	\caption{%
	Illustration of $\phi(z)$ (orange) and its upper bound $\phi(z) + \phi\paren{z^{-1}}$ (grey).
	They both are barriers near $0$.
	}
	\label{fig:phi}
\end{wrapfigure}
where these 2 lines are equal, and $\phi$ measures the discrepancy between the ratio $\frac{ \nat_n}{\nat_*} =  \frac{\m_*}{ \m_n}  $ and $1$ via the function
\begin{align}
	\phi(z) := z - 1 - \log(z),
\end{align}
illustrated in orange in Figure~\ref{fig:phi}.
To derive the upper bound for the MAP, due to the difficulty of finding a closed form for the expectation of the logarithm, we focus on the symmetrized Bregman instead
\alignn{
	\bregmanconj( \mu_*; \mu_n) 
	\leq \cS_{\conj} (\mu_*, \mu_n )
	&:= \bregmanconj( \mu_*; \mu_n)  + \bregmanconj( \mu_n; \mu_*) \nonumber\\
	&=  \alpha \paren{\frac{ \mu_*}{\mu_n} -1 +\frac{ \mu_n}{\mu_*} - 1 },
	\label{eq:symmetrized_bregman} \\
	&= \alpha\phi\paren{\frac{\mu_*}{\mu_n}} + \alpha\phi\paren{\frac{\mu_n}{\mu_*}}
}
Writing $z=\frac{\mu_*}{\mu_n}$ this is equivalent to
\aligns{
	\label{eq:log_bound}
	\phi(z) \leq \phi(z) + \phi\paren{z^{-1}}
	&=  z -1  + \inv{z} -1
	= \frac{(z-1)^2}{z} \; ,
}
and as illustrated by the grey upper bound in Figure~\ref{fig:phi}.

\subsection{Proof for the MLE}
	\label{app:gammaMLE}
	
	\begin{proof}
	Since $T(X)$ follows a gamma distribution $\Gamma(\alpha, \beta)$, the MLE is a scaled sum of gammas $\hat \mu_n = \inv{n} \sum_i T(X_i)$.
	As such it is also a gamma with parameter $\Gamma(n \alpha, n \beta)$ and expectation $\frac{n\alpha}{n\beta} = \frac{\alpha}{\beta} = \mu_*$.
	If we consider the ratio $\frac{\MAPm}{\mu_*}$, it is also a gamma with parameter $\Gamma(n \alpha, n \alpha)$.
	Its inverse follows an inverse gamma distribution with expectation
	\begin{equation}
		\expect{\frac{\mu_*}{\hat \mu_n}} 
		= \begin{cases}
			\frac{n\alpha}{n \alpha -1 }\ \text{ if } n\alpha > 1, \\
			+\infty \  \text{ otherwise. }
		\end{cases}
	\end{equation}
	which implies that for $n\alpha>1$, 
	\begin{align}
		\expect{\frac{\mu_*}{\hat \mu_n}} -1  
		= \frac{n \alpha}{n\alpha -1 } - 1
		= \frac{1}{n \alpha -1}
	\end{align}
	There is also a closed form solution for the expected logarithm of a gamma.
	Indeed, the sufficient statistic of a gamma is $(X, \log(X))$, so one can apply formula~\eqref{eq:mirror-map} on the log-partition of a gamma to get
	\begin{align}
		\expect{\log \frac{\hat \mu_n}{\mu_*}} 
		= \psi\paren{n \alpha} - \log\paren{n \alpha},
	\end{align}
	where $\psi$ is the \href{https://en.wikipedia.org/wiki/Digamma_function}{digamma function.}
	Consequently the suboptimality of the MLE has a closed form solution
	\begin{align}
	\expect{\bregmanconj( \mu_*; \hat \mu_n) }
	&= \alpha \expect{\frac{\mu_*}{ \hat \mu_n} - 1 + \log \left(\frac{\hat \mu_n}{\mu_*} \right) },
	\\
	& =
	\begin{cases}
		\alpha \paren{ \frac{1}{n\alpha - 1} + \psi\paren{n \alpha} - \log\paren{n \alpha} }, \ \text{ if } n\alpha>1, \\
			+\infty \  \text{ otherwise. }
	\end{cases}
	\end{align}
	Surprisingly, for a gaussian variance where $\alpha = \half$, we need $3$ samples or more for the expected loss to be bounded.
	When the expectation is finite, we can get a more interpretable formula using known \href{https://en.wikipedia.org/wiki/Digamma_function#Inequalities}{bounds on the digamma function},
	\begin{align}
		-\inv{x} \leq \psi(x) - \log(x) \leq -\inv{2x} = - \inv{x} + \inv{2x}	\; ,
	\end{align}
	giving, for $n \alpha >1 $,
	\begin{align}
		\frac{\alpha}{n\alpha-1} - \frac{\alpha}{n\alpha}
		&\leq \expect{\bregmanconj( \mu_*; \hat \mu_n) }
		\leq \frac{\alpha}{n\alpha-1} - \frac{\alpha}{n\alpha} + \frac{\alpha}{2n\alpha}
		\\
		\iff
			\frac{1}{n(n \alpha-1)}
			&\leq \expect{\bregmanconj( \mu_*; \hat \mu_n) }
			\leq \inv{2n} + \frac{1}{n(n \alpha-1)} \; ,
	\end{align}
	so we get a $\Omega(n^{-2})$ lower bound and a $O(n^{-1}) + O(n^{-2})$ upper bound.
	\end{proof}

\subsection{Multivariate MLE}
\label{app:multivariateMLE}

For the sake of simplicity, in higher dimension we focus on Gaussian covariance estimation and avoid the general Wishart discussion. 
In higher dimensions, $X \sim \cN(0, \mu_*)$, $X\in\real^d$, $T(X) = XX^\top$, and the mean parameter $\mu_*$ 
is a $d \times d$ symmetric, positive definite covariance matrix with $p = \frac{d(d+1)}{2}$ degrees of freedom.
Note that here $d$ denotes the dimensionality of the data $X$, rather than the dimensionality of the parameters $\mu_*$.

\begin{theorem}[Multivariate MLE Upper Bound]
	The MLE of  the covariance matrix of $X_i\sim\cN(0,\mu_*)$ is $\hat \mu_n = \inv{n} \sum_i X_i X_i^\top \in\real^{d\times d}$ with $p = \frac{d(d+1)}{2} $ degrees of freedom. 
	The expected KL divergence between $\mu_*$ and $\hat\mu_n$ 
	is infinite when $n\leq d+1$ and otherwise upper bounded by
	\alignn{
	 	\expect{\bregmanconj( \mu_*; \hat \mu_n^\text{MLE(d)})}
 	\leq \frac{p}{2n} + \frac{p(d+2)}{n (n - d -1)} 
 	\ , \forall n >d+1 \;.
}
\end{theorem}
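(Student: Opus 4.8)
The plan is to mirror the one-dimensional argument of Theorem~\ref{thm:gammaMLE}, replacing the gamma/inverse-gamma moments by their Wishart analogs. First I would write the divergence explicitly: since $\bregmanconj(\mu_*;\hat\mu_n) = \KL(\cN(0,\mu_*)\,\Vert\,\cN(0,\hat\mu_n))$, the standard Gaussian KL formula gives
\[
\bregmanconj(\mu_*;\hat\mu_n) = \half\left[\mathrm{tr}(\hat\mu_n^{-1}\mu_*) - d - \log\det(\hat\mu_n^{-1}\mu_*)\right],
\]
the exact matrix analog of $\alpha(\frac{\mu_*}{\mu_n}-1-\log\frac{\mu_*}{\mu_n})$, with the trace playing the role of the ratio and $-\log\det$ the role of $-\log$. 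Writing $Z_i = \mu_*^{-1/2}X_i \sim \cN(0,I_d)$ and $W = \sum_i Z_iZ_i^\top$, a Wishart $W_d(n,I_d)$, the substitution $\hat\mu_n = \tfrac1n\mu_*^{1/2}W\mu_*^{1/2}$ collapses all dependence on $\mu_*$, leaving the scale-free expression $\half[n\,\mathrm{tr}(W^{-1}) - d - d\log n + \log\det W]$. This confirms that the KL is pivotal (scale invariant) and reduces the theorem to two standard Wishart expectations.

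Second, I would evaluate those expectations. The inverse-Wishart mean gives $\E[\mathrm{tr}(W^{-1})] = \frac{d}{n-d-1}$ for $n>d+1$, which is precisely where the finiteness threshold comes from: for $n<d$ the matrix $\hat\mu_n$ is a.s.\ singular so the KL is deterministically $+\infty$, while for $n\in\{d,d+1\}$ the matrix is invertible a.s.\ but $\E[\mathrm{tr}(W^{-1})]=+\infty$, giving the claimed infinite expectation for all $n\le d+1$. For the log-determinant I would invoke the Bartlett decomposition $W=LL^\top$, under which $\det W = \prod_{i=1}^d \chi^2_{n-i+1}$ in distribution, so $\E[\log\det W] = d\log 2 + \sum_{i=1}^d \psi\!\big(\tfrac{n-i+1}{2}\big)$ with $\psi$ the digamma function, exactly as the scalar proof used $\E[\log\chi^2]=\psi(\cdot)+\log 2$.

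Assembling these yields an exact closed form. Using $\frac{nd}{n-d-1}-d = \frac{d(d+1)}{n-d-1} = \frac{2p}{n-d-1}$ isolates the leading trace contribution, and rewriting $-d\log n + d\log 2 + \sum_i\psi(x_i)$ with $x_i=\frac{n-i+1}{2}$ as $\sum_{i}\log\frac{n-i+1}{n} + \sum_i[\psi(x_i)-\log x_i]$ isolates the log terms. I would then apply the same digamma inequality as in the scalar case, $\psi(x)-\log x \le -\frac{1}{2x}$, together with $\log(1-t)\le -t$ and the crude harmonic bound $\sum_{i=1}^d \frac{1}{n-i+1}\ge \frac{d}{n}$. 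The negative log terms cancel half of $\frac{2p}{n-d-1}$, producing the $\frac{p}{2n}$ asymptote plus a $O\!\big(p(d+1)/(n(n-d-1))\big)$ correction, which is dominated by the stated $\frac{p(d+2)}{n(n-d-1)}$.

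The main obstacle is the bookkeeping in this last step: extracting the clean $\frac{p}{2n}$ leading term requires tracking the cancellation between $\half\cdot\frac{2p}{n-d-1}\approx\frac{p}{n}$ and the summed log/digamma corrections $\approx-\frac{p}{2n}$, and choosing the harmonic-sum bounds loosely enough to stay clean yet tightly enough to land inside the $\frac{p(d+2)}{n(n-d-1)}$ envelope. A secondary subtlety is rigorously justifying the $n\le d+1$ divergence, separating the a.s.-singular regime $n<d$ from the heavy-tailed cases $n\in\{d,d+1\}$ where $W$ is invertible but $\mathrm{tr}(W^{-1})$ has infinite mean. As a sanity check, specializing to $d=1$ reproduces the exact scalar result $\frac{1}{2n}+\frac{2}{n(n-2)}$, confirming the bound is correct (and in fact slightly tighter than the stated $\frac{p(d+2)}{n(n-d-1)}$ form).
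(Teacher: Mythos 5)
Your proposal is correct and follows essentially the same route as the paper: reduce to a standard Wishart $\mV = n\mu_*^{-1/2}\hat\mu_n\mu_*^{-1/2}$, use the inverse-Wishart mean for the trace term and the (multivariate) digamma identity for the log-determinant, then apply $\psi(x)-\log x\le -\tfrac{1}{2x}$ and $\log(1-t)\le -t$. Your cruder harmonic bound $\sum_{i}\tfrac{1}{n-i}\ge \tfrac{d}{n}$ is a valid simplification of the paper's sharper two-sided harmonic estimate and in fact lands on the marginally tighter constant $p(d+1)$ in place of $p(d+2)$.
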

We see from~\eqref{eq:asymptote} that this bound is asymptotically tight.
\begin{proof}
The entropy of $X$ is a negative log-determinant $\conj(\mu) = -\log\det(\mu)$, whose gradient is the negative matrix inverse $\nabla\conj (\mu) = - \mu^{-1}$.
The associated Bregman divergence is
\alignn{
	\bregmanconj(\mu_*; \hat \mu_n) 
	= \half \paren{\Tr(\mu_* \hat \mu_n^{-1} ) - d  - \log\det(\mu_* \hat \mu_n^{-1}) } \; .
}
Thanks to the linearity of the trace, the expectation becomes
\alignn{
	\expect{\bregmanconj(\mu_*; \hat \mu_n) }
	= \half \paren{\Tr(\expect{\mu_* \hat \mu_n^{-1}} ) - d   
	+\expect{\log\det( \hat \mu_n \mu_*^{ -1}))} } \; .
}
When the estimator is the MLE, $\hat \mu_n = \inv{n} \sum_i X_i X_i^\top$, 
then we define the mean parameter ``ratio'' as
\aligns{
	\mV &:= n \mu_*^{-\half}\hat \mu_n \mu_*^{-\half} = \sum_i (\mu_*^{-\half} X_i) (\mu_*^{-\half}X_i)^\top,
}
such that $\Tr(\mu_* \hat \mu_n^{-1})  = n \Tr(\mV^{-1})$.
But $\mu_*^{-\half}X_i \sim \cN(0,\mI)$, so that $\mV$
is sampled from a Wishart  $\cW(\mI, n)$, where $\mI$ stands for the identity matrix of order $d$.
Recall that $\E[\mV] = n \mI$.
Thanks to $\log\det$ being a sufficient statistic of the Wishart, and the natural to mean parameter formula~\eqref{eq:mirror-map}, we have a closed form for the expected log-determinant of a Wishart 
\aligns{
	\E[\log\det \mV] &= \psi_d\paren{\half[n]} + d \log 2 \; ,
}
where $\psi_d(\half[n]) = \sum_{i=0}^{d-1} \psi ( \half[n - i])$ is the multivariate digamma function.
The expectation of an inverse Wishart  is straightforward to compute from the density and the log-partition function
\begin{equation}
	\E[\mV^{-1}] = \begin{cases}
		+\infty \text{ if } n\leq d+1 \\
		 \frac{\mI}{n - d - 1} \text{ otherwise,} 
	\end{cases}
\end{equation}
which proves the infinite part of the statement.
Consider now the case $n>d+1$. Using
\alignn{
	\Tr(\expect{\mu_* \hat \mu_n^{-1}}) - d
	= n\Tr(\E[\mV^{-1}]) - d 
	= \frac{nd}{n - d - 1} -d
	= \frac{d (d+1)}{n -d -1}
	= \frac{2p}{n -d -1},
}
and putting it all together, we get the following closed form for the expectation of the divergence
 \begin{align}
 	\label{eq:where-we-put-it-back-together-multivariate-gaussian}
 	\expect{\bregmanconj( \mu_*; \hat \mu_n^\text{MLE(d)})}
 	= \frac{p}{n-d-1} + \half \left ( \psi_d\paren{\half[n]} - d \log\paren{\half[n]}  \right ).
 \end{align}
To bound $\psi_d$, we can use the same bound as in the univariate case, $\psi(x) \leq \log(x) - \inv{2x}$, to get
\begin{align}
	\psi_d\paren{\half[n]} - d \log\paren{\half[n]}
	&= \sum_{i=0}^{d-1} \paren{\psi \paren{\half[n - i]} - \log\paren{\half[n]} } \\
	&\leq \sum_{i=0}^{d-1} \paren{\log\paren{\half[n-i]} - \inv{n-i} - \log\paren{\half[n]} } \\
	&= \sum_{i=0}^{d-1} \paren{\log\paren{1 - \frac{i}{n}} - \frac{1}{n-i}} \; .
	\label{eq:psid_bound}
\end{align}
We can bound sum of reciprocals $\sum_{i=0}^{d-1} \frac{1}{n-i}$ by the typical bound on 
the harmonic sum $H_n = \sum_{k=1}^n \frac{1}{k}$,
\aligns{
	\inv{2n +1}
	\leq H_n - \log(n) - \gamma \leq 
	\inv{2n - 1},
}
where $\gamma$ is the Euler constant. Then
\begin{align}
	- \sum_{i=0}^{d-1} \frac{1}{n-i} 
	= H_{n-d} - H_n
	&\leq \log(n-d) + \gamma + \inv{2(n-d) - 1} - \log(n) - \gamma - \inv{2n+1} \\
	&= \log\paren{1 - \frac{d}{n}}  + \frac{2(d + 1)}{(2n+1)(2(n - d) - 1)} \\
	&< \log\paren{1 - \frac{d}{n}}  + \frac{d + 1}{2n(n - d - 1)} \; .
\end{align}
Plugging this back into~\eqref{eq:psid_bound} yields
\begin{align}
	\psi_d(\half[n]) - d \log(\half[n]) 
	\leq \sum_{i=0}^{\textcolor{red}{d}} \log\paren{1 - \frac{i}{n}}
	+  \frac{d + 1}{2n(n - d - 1)} \; ,
\end{align}
where the sum now goes up to $d$.
To bound the remaining sum, we use $\log(1+x)\leq x$ (for $x > -1$) to get
\begin{align}
	\sum_{i=0}^{d} \log\paren{1 - \frac{i}{n}}
	\leq \sum_{i=0}^{d} - \frac{i}{n}
	 = -\frac{d(d+1)}{2n} = -\frac{p}{n} \; .
\end{align}
Putting those bounds together in 
\cref{eq:where-we-put-it-back-together-multivariate-gaussian}
and reorganizing yields
\begin{align}
 	\expect{\bregmanconj( \mu_*; \hat \mu_n^\text{MLE(d)})}
 	&< p \paren{ \inv{n-d-1} - \inv{2n}}  + \frac{d + 1}{4n(n - d - 1)}\\
 	&= p \frac{n + d +1}{2 n (n-d-1)}  + \frac{d(d + 1)/2}{2dn(n - d - 1)}\\
 	&= p\frac{n-d-1}{2n(n-d-1)} + p\frac{2(d+1)}{2 n (n-d-1)} + \frac{p}{2dn (n - d - 1)}\\
 	&= \frac{p}{2n} + \frac{p(d+ 1 + \inv{2d})}{n (n-d-1)} \\
 	&\leq \frac{p}{2n} + \frac{p(d+2)}{n (n - d -1)}
 	\ , \forall n >d+1 \;.
\end{align}
where $p = \frac{d(d+1)}{2}$ and the last inequality used $\inv{2d}\leq 1$. 
\end{proof}

\subsection{Bounding the expected natural parameters for the MAP}
\label{app:nat-bound}
Before proving a convergence rate for the MAP, we need to bound the expectation of its inverse, hence the following lemma.
We introduce the notation $(z)_+ = \max(0,z)$.
\begin{lemma}[Expected MAP natural parameter]\label{lem:expected-map-natural-parameter-gaussian}
	Define the variable $a = n_0 \frac{\mu_0}{\mu^*}$, which characterizes 
	the importance of the prior relative to the true parameters.
	The expectation of the natural parameter of a MAP of $\Gamma(\alpha,\beta)$ is bounded by
\begin{equation}
		\frac{n_0+n}{a+n}
		=\frac{\mu^*}{\mu_n}
		\leq \expect{\frac{\mu^*}{\MAPm}} 
		= \expect{\frac{\MAPt}{\nat^*}} 
		\leq
		\frac{n_0 +n}{a+(n-\inv{\alpha} )_+} \; , \forall n\geq 0 \; .
		\label{eq:lemma_nat_bound}
	\end{equation}
\end{lemma}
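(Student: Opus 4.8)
\emph{Reduction to one scalar variable.} The plan is to collapse the entire statement onto a single standardized gamma variable. Since the shape $\alpha$ is known, each $T(X_i)=X_i\sim\Gamma(\alpha,\beta)$ with $\mu^*=\alpha/\beta$, so the standardized partial sum $S:=\sum_{i=1}^n X_i/\mu^*$ is $\Gamma(n\alpha,\alpha)$ with mean $n$. Writing $a=n_0\mu_0/\mu^*$ and substituting the MAP formula~\eqref{eq:defMAP}, I would first record the pointwise identity
\[
\frac{\mu^*}{\MAPm}=\frac{(n_0+n)\,\mu^*}{n_0\mu_0+\sum_{i=1}^n X_i}=\frac{n_0+n}{a+S}.
\]
The middle equality of~\eqref{eq:lemma_nat_bound}, $\expecti{\mu^*/\MAPm}=\expecti{\MAPt/\nat^*}$, is then immediate from $\nat=-\alpha/\mu$, which makes $\MAPt/\nat^*=\mu^*/\MAPm$ hold term by term. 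Everything reduces to bounding the scalar $f(a):=\expecti{1/(a+S)}$, since $\expecti{\mu^*/\MAPm}=(n_0+n)f(a)$; dividing the claimed inequalities by $n_0+n$, it suffices to prove $1/(a+n)\le f(a)\le 1/(a+(n-\inv{\alpha})_+)$.

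\emph{Lower bound and base value.} The lower bound is a one-line convexity argument: $x\mapsto 1/x$ is convex and $\expecti{a+S}=a+n$, so Jensen gives $f(a)\ge 1/(a+n)$, and multiplying by $n_0+n$ recovers $\mu^*/\bar\mu_n$ with the deterministic update $\bar\mu_n=(n_0\mu_0+n\mu^*)/(n_0+n)$. For the upper bound I would first compute the base value $f(0)=\expecti{1/S}$: the reciprocal of $\Gamma(n\alpha,\alpha)$ is inverse-gamma, so $\expecti{1/S}=\alpha/(n\alpha-1)=1/(n-\inv{\alpha})$ when $n\alpha>1$, and $\expecti{1/S}=+\infty$ otherwise. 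Crucially, in both regimes $1/f(0^+)$ equals exactly $(n-\inv{\alpha})_+$, which is what lets the two cases be handled uniformly.

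\emph{The key step and the main obstacle.} Because $f(a)$ has no elementary closed form (only one through incomplete-gamma / exponential-integral functions), I would avoid evaluating it and instead control it through its derivative in the prior strength $a$. Differentiating under the integral sign gives $f'(a)=-\expecti{1/(a+S)^2}$, and since $\expecti{Z^2}\ge(\expecti{Z})^2$ for $Z=1/(a+S)$, I obtain the autonomous inequality $f'(a)\le -f(a)^2$ for $a>0$. Setting $h:=1/f$, this reads $h'(a)\ge 1$; integrating from $0^+$ and using $h(0^+)=(n-\inv{\alpha})_+$ yields $h(a)\ge a+(n-\inv{\alpha})_+$, that is $f(a)\le 1/(a+(n-\inv{\alpha})_+)$, the desired bound (the endpoint $n=0$, where $S\equiv0$, is a direct equality). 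I expect the reduction and the Jensen lower bound to be routine, so the crux is this upper bound: the real obstacle is precisely the absence of a closed form for $\expecti{1/(a+S)}$, and the device that resolves it is to promote $a$ to a variable and exploit the self-replicating structure $f'\le -f^2$ coming from the nonnegative variance of $1/(a+S)$.
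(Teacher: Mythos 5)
Your proof is correct, but the key step is genuinely different from the paper's. Both proofs reduce to bounding $f(a)=\expecti{1/(a+S)}$ with $S\sim\Gamma(n\alpha,\alpha)$ and both get the lower bound from Jensen; the divergence is in the upper bound. The paper evaluates $f(a)$ in closed form through the generalized exponential integral $E_k$ and then invokes the tabulated inequality $e^x E_k(x)\le \inv{x+k-1}$ (valid only for $k>1$, i.e.\ $n\alpha>1$), which forces a separate case $n\alpha\le 1$ handled by the crude bound $\expecti{1/(a+S)}<1/a$. You instead promote $a$ to a variable, observe $f'(a)=-\expecti{1/(a+S)^2}\le -f(a)^2$, and integrate $h=1/f$ from the boundary value $h(0^+)=1/\expecti{1/S}=(n-\inv{\alpha})_+$ (the inverse-gamma mean, with the convention $1/\infty=0$). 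This buys you three things: no special functions or external tables, a single argument covering $n\alpha\le 1$ and $n\alpha>1$ uniformly (since $h(0^+)$ automatically degenerates to $0$ in the first regime), and in effect a self-contained rederivation of the DLMF inequality in the case needed. What it does not give you is the explicit representation $f(a)=\alpha e^{a\alpha}E_{n\alpha}(a\alpha)$, which the paper exploits afterwards (via the $E_k$ recurrence) to state a tighter, if less elegant, bound in the small-$n$ regime. The only point to make fully explicit in a write-up is the justification for differentiating under the integral sign and for the limit $f(a)\uparrow\expecti{1/S}$ as $a\to 0^+$ (monotone convergence); both are routine here.
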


\begin{proof}
	The lower bound can be readily obtained by applying Jensen's inequality 
	to the convex function $x\mapsto\inv{x}$ for $x>0$.
	The upper bound requires more work.
	To start, let us plug in the definition of $\MAPm$
\alignn{
		\expect{\frac{\mu^*}{\MAPm}} 
		= \expect{\frac{\MAPt}{\nat^*}} 
		= \expect{\frac{(n_0+n) \mu^*}{n_0 \mu_0 + \sum_i X_i}}  
		= \expect{\frac{n_0+n}{n_0 \frac{\mu_0}{\mu^*}+ \sum_i \frac{X_i}{ \mu^*} }} 
		= \expect{\frac{n_0+n}{a + \Gamma(n\alpha, \alpha)}} \; ,
}
	where  $\sum_i \frac{X_i^2}{ \mu^*} \sim \Gamma(n\alpha, \alpha)$ is a gamma random variable and $a=n_0 \frac{\mu_0}{\mu^*}$. 
	Further note that 
\alignn{
	\expect{\frac{n_0+n}{a + \Gamma(n\alpha, \alpha)}} 
	= \frac{n_0 + n}{a} \expect{\frac{1}{1 + \Gamma(n \alpha , a \alpha) }} \; .
}
This kind of integrals can be expressed with {generalized exponential integral functions}
	\begin{align}
		E_k(z) = \int_1^\infty \frac{e^{-z t} }{t^k} dt \; ,
	\end{align}
with \href{http://dlmf.nist.gov/8.19.E4}{the formula} \citep[Eq.~8.19.4]{DLMF}
\alignn{
	\expect{\frac{1}{1 + \Gamma(\alpha , \beta) }}
	= \frac{\beta^{\alpha}}{\Gamma(\alpha)} \int_0^\infty \frac{x^{\alpha-1}e^{-\beta x} }{1+x}  dx 
	= \beta e^{\beta} E_\alpha (\beta) \; .
	\label{eq:expect1+gamma}
}
Overall we get
\alignn{
\expect{\frac{\mu^*}{\MAPm}}  = (n_0 +n) \alpha e^{a \alpha} E_{n\alpha} ( a \alpha) 
}
	Now our goal is to bound this generalized exponential integral with simpler functions.
	Fortunately, mathematicians have been working on these integrals for decades.
	For instance , we have \href{https://dlmf.nist.gov/8.19.E21}{the general bound} \citep[Eq.~8.19.21]{DLMF}
	\begin{align}
		e^x E_k(x) 
		&\leq \inv{x + k - 1} , \ \forall k>1\\
		\iff \expect{\frac{\mu^*}{\MAPm}} 
		&\leq \frac{(n_0 +n)\alpha}{a\alpha + n\alpha - 1}
		= \frac{n_0 +n}{a + n - \inv{\alpha}} , \ \forall n >\inv{\alpha} \; .
	\end{align}
We are left with a special case when  $n\leq \alpha$. 
Then we can use the trivial bound 
\begin{align}
	\expect{\inv{a + X^2}} < \inv{a}.
	\label{eq:trivial_bound}
\end{align}
to conclude the proof.
\end{proof}

When $n<\inv{\alpha}$, it is possible to get a much tighter bound by exploiting \href{http://dlmf.nist.gov/8.19.E12}{the recurrence relationship} \cite[Eq.~8.19.12]{DLMF}
\alignn{
	\alpha E_{\alpha+1}(\beta) + \beta E_\alpha(\beta) 
	= e^{-\beta} 
}
and combining it with the inequality \citep[Eq.~8.19.21]{DLMF} to get
\alignn{
	\beta e^\beta E_\alpha(\beta)
 	= 1 - \alpha e^\beta E_{\alpha+1}(\beta)
 	\leq 1 - \frac{\alpha}{\alpha + 1 + \beta}
 	 = \frac{\beta + 1}{\alpha + \beta + 1} \; .
}
Plugging this inequality back into~\eqref{eq:expect1+gamma}, we get the following upper bound for the MAP:
\alignn{
	\expect{\frac{\mu^*}{\MAPm}} 
	\leq \frac{n_0 + n}{a} \frac{a\alpha + 1}{n \alpha + a \alpha +1}
	= \frac{n_0 + n}{a + n +\inv{\alpha}} \paren{1 + \inv{a \alpha}} \; .
}
Unfortunately, this formula does not yield an elegant convergence rate, so we keep it out of the lemma.

\subsection{Proof of MAP Bound}
\label{app:MAP-bound}

We did not find a closed form or an upper bound for the expected logarithm of the MAP $\expect{\log\frac{\MAPm}{\mu^*}}$.
Consequently, we derived a bound for the symmetrized Bregman~\eqref{eq:symmetrized_bregman} instead. 
This bound is asymptotically tight for the Bregman, up to a factor $2$.

There are several ways to write down the convergence rate. The Gaussian variance example can be written in a particularly simple form, so we give  it a special treatment in \S\ref{app:proof-MAP-gaussian}, corresponding to the theorem displayed in the main text.
We make a more general statement about gamma distributions in \S\ref{app:proof-MAP-gamma}.

\subsubsection{Gaussian Variance}
\label{app:proof-MAP-gaussian}

\begin{theorem}[MAP Bound]
The expected symmetrized Bregman~\eqref{eq:symmetrized_bregman} of the MAP of $\cN(  0,\m^*)$ with prior hyper-parameters $(n_0,\m_0)$ is upper bounded as
 \begin{align}
	& \expect{\cS_\conj( \m^*; \hat \m_n^\mathrm{MAP})}
	\leq 
	\left\{\begin{array}{ll}
		\cS_\conj( \mu_*; \mu_0) 					& \text{ if } \ n=0, \\
		\inv{2(n_0+1)}  +  b_1 						& \text{ if }\ n=1,\\
		\frac{1}{n_0 \frac{\m_0}{\m^*} +n-2} + b_n  & \text{ if }\ n\geq 2,
	\end{array}\right.
	\quad 
	\quad 
	\text{where }
	b_n = \frac{(1 + \inv{n_0} - \frac{\m_0}{\m^*})^2}{2 (\frac{\m_0}{\m^*}+\frac{(n-2)_+}{n_0})(1 + \frac{n}{n_0} )} \in O\paren{\frac{n_0^2}{n^2}} \; . 
\end{align}
\end{theorem}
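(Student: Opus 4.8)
The plan is to reduce the whole statement to two scalar expectations and then a short algebraic manipulation. For the centered Gaussian ($\alpha=\half$) the symmetrized Bregman from~\eqref{eq:symmetrized_bregman} is $\cS_\conj(\m^*,\MAPm)=\half\bigl(\tfrac{\m^*}{\MAPm}+\tfrac{\MAPm}{\m^*}-2\bigr)$, so by linearity of expectation
\[
\expect{\cS_\conj(\m^*;\MAPm)}=\half\Bigl(\expect{\tfrac{\m^*}{\MAPm}}+\expect{\tfrac{\MAPm}{\m^*}}-2\Bigr).
\]
The second term is exact: since $\MAPm=\tfrac{n_0\m_0+\sum_i T_i}{n_0+n}$ from~\eqref{eq:defMAP} and $\E[T_i]=\m^*$, I get $\expect{\MAPm/\m^*}=\tfrac{a+n}{n_0+n}$ with $a:=n_0\tfrac{\m_0}{\m^*}$. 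The first term is the hard one, and it is exactly what Lemma~\ref{lem:expected-map-natural-parameter-gaussian} controls: for $n\ge 2$ it yields $\expect{\m^*/\MAPm}\le \tfrac{n_0+n}{a+n-2}$.

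\textbf{Extracting the variance term.} Writing $p:=n_0+n$ and $q:=a+n$, I would substitute both bounds and put the bracket over the common denominator $p(q-2)$, whose numerator simplifies to $(p-q)^2+4p-2q$. The idea is to peel off the variance term: using $\tfrac{1}{q-2}=\tfrac{2p}{2p(q-2)}$, one finds
\[
\expect{\cS_\conj(\m^*;\MAPm)}\le \frac{1}{q-2}+\frac{(p-q)(p-q+2)}{2p(q-2)},
\]
and $\tfrac{1}{q-2}=\tfrac{1}{a+n-2}=\tfrac{1}{n_0\frac{\m_0}{\m^*}+n-2}$ is precisely the claimed variance term.

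\textbf{Completing the square for the bias.} The key step is then a completing-the-square bound on what remains. Setting $s:=p-q=n_0(1-\tfrac{\m_0}{\m^*})$, the bias numerator is $s(s+2)=(s+1)^2-1\le (s+1)^2$; dividing by $2p(q-2)$ and substituting $s+1=n_0(1+\tfrac{1}{n_0}-\tfrac{\m_0}{\m^*})$, $p=n_0(1+\tfrac{n}{n_0})$ and $q-2=n_0(\tfrac{\m_0}{\m^*}+\tfrac{n-2}{n_0})$ reproduces exactly $b_n$, which settles $n\ge 2$. For $n=1$ the lemma only supplies its trivial bound $\expect{\m^*/\MAPm}\le\tfrac{n_0+1}{a}$ (since $n<1/\alpha=2$); the same manipulation with $p=n_0+1$ now gives numerator $(p-a)^2+a$, i.e. $\expect{\cS_\conj}\le\tfrac{1}{2(n_0+1)}+\tfrac{(p-a)^2}{2pa}$, and a direct check shows $\tfrac{(p-a)^2}{2pa}=b_1$. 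The case $n=0$ is immediate since $\MAPm=\m_0$ is deterministic.

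\textbf{Main obstacle.} The genuinely hard analytic work---bounding $\expect{\m^*/\MAPm}$ through generalized exponential integrals---is already packaged in Lemma~\ref{lem:expected-map-natural-parameter-gaussian}, so within this theorem the only delicate points are recognizing the completing-the-square identity $s(s+2)\le(s+1)^2$ that turns the true bias into the slightly larger square appearing in $b_n$, and correctly splitting the cases at the threshold $n=1/\alpha=2$, where the lemma switches from its exponential-integral bound to its trivial bound.
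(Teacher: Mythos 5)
Your proposal is correct and follows essentially the same route as the paper's proof: expand $\cS_\conj$ into the two scalar expectations, compute $\expecti{\MAPm/\m^*}$ exactly, control $\expecti{\m^*/\MAPm}$ via Lemma~\ref{lem:expected-map-natural-parameter-gaussian} with the case split at $n=1/\alpha=2$, and finish with the same completing-the-square step ($s(s+2)=(s+1)^2-1\leq(s+1)^2$, which the paper writes as dropping the $-1$ in $(n_0-a+1)^2-1$). The only difference is cosmetic bookkeeping — you put everything over a common denominator up front, whereas the paper telescopes $\tfrac{1}{a+n-2}-\tfrac{1}{n_0+n}$ — so there is nothing substantive to flag.
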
 

\begin{proof}
When $n=0$, the inequality is an equality. 
For $n>0$, we expand the symmetrized Bregman~\eqref{eq:symmetrized_bregman} with $\alpha=\half$ to get
\begin{align}
	\expect{\cS_\conj(\mu^*; \MAPm)} 
	&\leq \half \paren{\expect{\frac{\mu^*}{\MAPm}} -1  + \expect{\frac{\MAPm}{\mu^*}} - 1} \; .
\end{align}
The expectation of $\MAPm$ is straightforward
\begin{align}
	 \expect{\frac{\MAPm}{\mu^*}} - 1 
	 = \frac{n_0 \mu_0 + n \mu^*}{(n_0+n)\mu^*} - 1 
	 = \frac{a+n}{n_0+n} - 1 = \frac{a - n_0}{n_0+n}
	 \quadtext{where}
	 a:=n_0\frac{\mu_0}{\mu^*} \; .
\end{align}
There remains the more problematic term with the expectation of the inverse mean parameter, 
for which we use the bound derived in Lemma \ref{lem:expected-map-natural-parameter-gaussian}.

\textbf{When $n\geq 2$}, we get 
\begin{align}
	\expect{\frac{\mu^*}{\MAPm}} - 1 
	\leq \frac{n_0 + n}{a +n - 2} -1
	= \frac{n_0 - a + 2}{a +n - 2}
	 = \frac{2}{a +n - 2} + \frac{n_0 - a}{a +n - 2}
\end{align}
so putting it all together we get 
\begin{align}
	\expect{\bregmanconj(\mu^*; \MAPm)} 
	&\leq \frac{1}{a+n-2} + \frac{n_0-a}{2}  \left( \inv{a+n-2} - \inv{n_0+n} \right)\\
	&= \frac{1}{a+n-2} + \frac{(n_0-a + 1) - 1}{2}  \frac{(n_0 - a + 1) + 1}{(a+n-2)(n_0+n)} \\
	&= \frac{1}{a+n-2} + \frac{(n_0 - a +1)^2 - 1}{2(a+n-2)(n_0+n)} \\
	&\leq \frac{1}{a+n-2} + \frac{(n_0 - a +1)^2}{2(a+n-2)(n_0 + n)} \\
	&= \frac{1}{a+n-2} + \frac{(1 + \inv{n_0} - \frac{\mu_0}{\mu^*})^2}{2 (\frac{\mu_0}{\mu^*}+\frac{n-2}{n_0})(1 + \frac{n}{n_0} )} \; .
\end{align}

\textbf{When $n=1$} the bound~\eqref{eq:lemma_nat_bound}  on the expected natural parameter gives
\begin{align}
	\expect{\frac{\mu^*}{\MAPm}} - 1 
	\leq \frac{n_0 + 1}{a} -1
	= \frac{n_0 +1 - a}{a}
\end{align}
so putting it all together we get
\begin{align}
	2\expect{\bregmanconj(\mu^*; \hat \mu_1)} 
	&\leq \frac{a - n_0 \pm 1}{n_0 + 1}  + \frac{n_0 +1 - a }{a} \\
	& = \inv{n_0+1}  + ( a - n_0 - 1)(\inv{n_0+1} - \inv{a}) \\
	& = \inv{n_0+1}  + \frac{(n_0 + 1 -a)^2}{a(n_0+1)}\\
	&= \inv{n_0+1}  + \frac{(1 + \inv{n_0} - \frac{\mu_0}{\mu^*})^2}{\frac{\mu_0}{\mu^*}(1 + \inv{n_0} )} 
\end{align}
so we  recover the same bias term as when $n\geq2$.
\end{proof}

\subsubsection{Gamma with Known Shape}
\label{app:proof-MAP-gamma}

\begin{theorem}[MAP Bound]
Consider an exponential distribution with sufficient statistics coming from a gamma distribution $\Gamma(\alpha, \beta^*)$ with mean parameter $\mu^* = \frac{\alpha}{\beta^*}$.
The expected symmetrized Bregman~\eqref{eq:symmetrized_bregman} of the MAP  with prior hyper-parameters $(n_0,\mu_0)$ is upper bounded as
\alignn{
    \forall n\geq \inv{\alpha},\quad \quad
	\expect{\cS_\conj(\mu^*, \hat \mu_n)}
	\leq\frac{1}{n_0+n} + 
	\frac{\alpha\paren{\frac{\mu_0}{\mu^*} - \inv{\alpha n_0} - 1 }^2 }{\paren{1+\frac{n}{n_0}}\paren{\frac{\mu_0}{\mu^*} +  \frac{n - \inv{\alpha}}{n_0}}}
}
\end{theorem}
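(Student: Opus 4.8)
The plan is to follow the Gaussian-variance argument of \S\ref{app:proof-MAP-gaussian} verbatim in structure, now carrying a general shape $\alpha$ rather than fixing $\alpha=\half$. The starting point is that in the symmetrized divergence~\eqref{eq:symmetrized_bregman} the two logarithmic terms cancel, leaving the rational closed form $\expect{\cS_\conj(\mu^*,\MAPm)} = \alpha\paren{\expect{\mu^*/\MAPm} - 1 + \expect{\MAPm/\mu^*} - 1}$. Thus only two scalar expectations remain to be controlled.

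The second expectation is immediate: writing $a := n_0\frac{\mu_0}{\mu^*}$ and using $\MAPm = \frac{n_0\mu_0 + \sum_i T_i}{n_0+n}$ with $\expect{T_i} = \mu^*$, one obtains $\expect{\MAPm/\mu^*} - 1 = \frac{a-n_0}{n_0+n}$. For the first (the inverse mean parameter) I would invoke Lemma~\ref{lem:expected-map-natural-parameter-gaussian}, using its upper bound $\expect{\mu^*/\MAPm} \leq \frac{n_0+n}{a+n-1/\alpha}$, which is exactly the branch valid once $n \geq \frac{1}{\alpha}$; this is the sole place the hypothesis $n\geq \frac{1}{\alpha}$ enters. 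Subtracting $1$ gives $\expect{\mu^*/\MAPm} - 1 \leq \frac{n_0 - a + 1/\alpha}{a+n-1/\alpha}$.

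Next I would put both controlled terms over the common denominator $(a+n-\tfrac{1}{\alpha})(n_0+n)$. Setting $d := n_0 - a$, a brief expansion collapses the numerator to $\alpha d^2 + d + (n_0+n)$. The key algebraic move -- and essentially the only step requiring cleverness -- is to complete the square and recognize $\alpha d^2 + d + (n_0+n) = (a+n-\tfrac{1}{\alpha}) + \alpha\paren{d+\tfrac{1}{\alpha}}^2$. Dividing through then cleanly separates the variance term $\frac{1}{n_0+n}$ from the bias term $\frac{\alpha(d+1/\alpha)^2}{(n_0+n)(a+n-1/\alpha)}$.

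Finally, I would translate back to the hyperparameter ratio via $\frac{\mu_0}{\mu^*} = \frac{a}{n_0}$, the identity $d + \frac{1}{\alpha} = -n_0\paren{\frac{\mu_0}{\mu^*} - \frac{1}{\alpha n_0} - 1}$, and the rescalings $n_0+n = n_0\paren{1+\frac{n}{n_0}}$ and $a+n-\frac{1}{\alpha} = n_0\paren{\frac{\mu_0}{\mu^*} + \frac{n-1/\alpha}{n_0}}$, which turn the bias term into exactly the stated expression. I expect the completing-the-square identity together with careful sign-tracking through $d = n_0 - a$ to be the main (and nearly only) obstacle; every other step is either the Gaussian template specialized to general $\alpha$ or a direct consequence of the lemma.
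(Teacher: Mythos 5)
Your proposal is correct and follows essentially the same route as the paper's proof in App.~\ref{app:proof-MAP-gamma}: the same reduction of $\cS_\conj$ to the two scalar expectations $\expecti{\mu^*/\MAPm}$ and $\expecti{\MAPm/\mu^*}$, the same exact computation of the latter, and the same invocation of Lemma~\ref{lem:expected-map-natural-parameter-gaussian} for the former (which is indeed the only place $n\geq\inv{\alpha}$ is used). The only difference is cosmetic: where you put both terms over a common denominator and complete the square in $d=n_0-a$, the paper adds and subtracts $\inv{\alpha}$ and factors the resulting difference of reciprocals -- both yield the identical split into $\inv{n_0+n}$ plus the bias term.
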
 
Note that the second term vanishes when $\mu_0 = \mu^* \paren{1 + \inv{\alpha n_0}}$.
Expressions for $n\alpha < 1$ are less elegant as shown below:
\alignn{
 	\expect{\cS_\conj(\mu^*, \hat \mu_n)}
	\leq \frac{n_0 + n}{a + n +\inv{\alpha}} \paren{1 + \inv{a \alpha}} -1 + \frac{a - n_0}{n_0+n}
	\quadtext{where}
	 a:=n_0\frac{\mu_0}{\mu^*} \; .
}

\begin{proof}
We expand the symmetrized Bregman~\eqref{eq:symmetrized_bregman} to get
\begin{align}
	\expect{\cS_\conj(\mu^*; \MAPm)} 
	&\leq \alpha \paren{\expect{\frac{\mu^*}{\MAPm}} -1  + \expect{\frac{\MAPm}{\mu^*}} - 1} \; .
\end{align}
The expectation of $\MAPm$ is straightforward
\begin{align}
	 \expect{\frac{\MAPm}{\mu^*}} - 1 
	 = \frac{n_0 \mu_0 + n \mu^*}{(n_0+n)\mu^*} - 1 
	 = \frac{a+n}{n_0+n} - 1 = \frac{a - n_0}{n_0+n}
	 \quadtext{where}
	 a:=n_0\frac{\mu_0}{\mu^*} \; .
\end{align}
There remains the more problematic term with the expectation of the inverse mean parameter, 
for which we use the bound derived in Lemma \ref{lem:expected-map-natural-parameter-gaussian}  when $n\alpha \geq 1$
\begin{align}
	\expect{\frac{\mu^*}{\MAPm}} - 1 
	\leq \frac{n_0 + n}{a +n - \inv{\alpha}} -1
	= \frac{n_0 - a + \inv{\alpha}}{a +n - \inv{\alpha}}
\end{align}
so putting it all together we get 
\begin{align}
	\expect{\bregmanconj(\mu^*; \MAPm)} 
	&\leq 
	\alpha \paren{\frac{n_0 - a + \inv{\alpha}}{a +n - \inv{\alpha}} 
	 + \frac{a - n_0 \pm \inv{\alpha}}{n_0+n}} \\
	 &=
	 \alpha \paren{n_0 + \inv{\alpha} -a}  
	 \left( \inv{a+n-\inv{\alpha}} - \inv{n_0+n} \right)
	 + \inv{n_0 + n} \\
	 &= \inv{n_0 + n} + \alpha \frac{(n_0 + \inv{\alpha} - a)^2}{(n_0 + n)(a+ n - \inv{\alpha} )} \\
	&= \inv{n_0 + n} + \alpha \frac{(1 + \inv{\alpha n_0} - \frac{\mu_0}{\mu^*})^2}{ (1 + \frac{n}{n_0} ) (\frac{\mu_0}{\mu^*} - \inv{\alpha n_0} +\frac{n}{n_0})} \; .
\end{align}
\end{proof}

\subsection{On the Choice of a Prior}
\label{app:prior-choice}
The optimal $\mu_0$ is larger than $\mu^*$ for small $n_0$ and small $n$.
Indeed, the upper bound~\eqref{eq:MAP_rate} has a bias term that is $0$ when $\frac{mu_0}{\mu^*} = 1 + \inv{n_0}$, e.g. for large values of $n_0$, it is $\mu_0=\mu^*$ is the best prior, but for small $n_0$, one better sets larger values for $\mu_0$. In Figure~\ref{fig:optimal_n0}, we observe this behavior numerically.

\begin{figure}[ht]
	\centering
	\includegraphics[width=\textwidth]{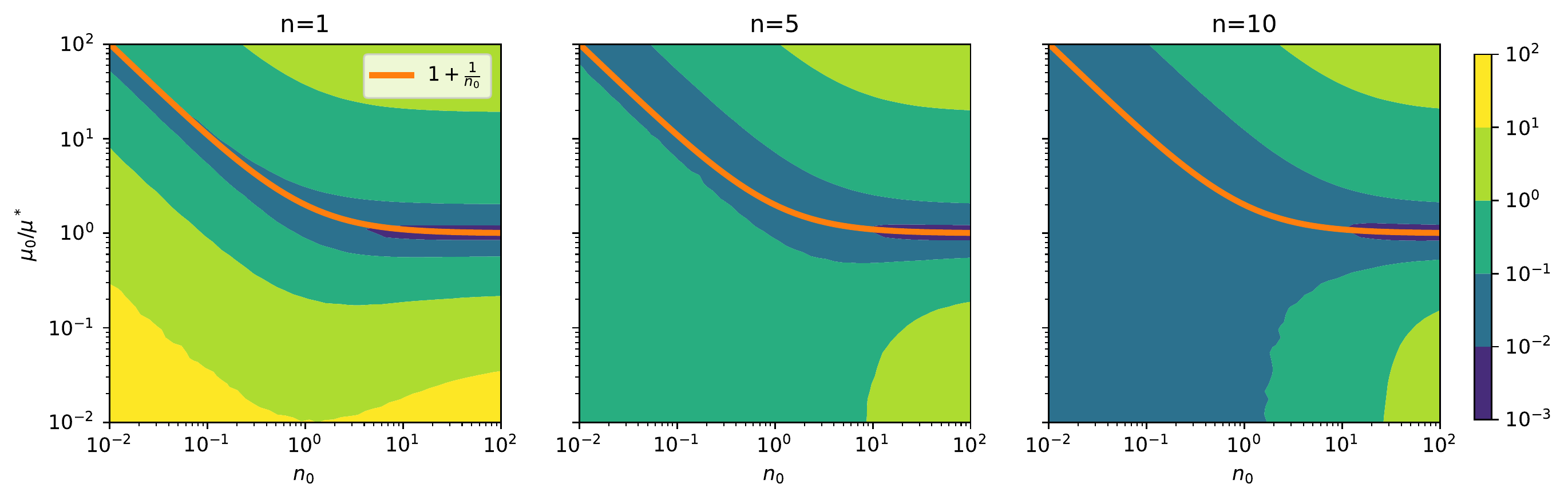}
	\caption{Contours of $\expect{\KL(\nat^*, \hat \nat_n)}$  for the Gaussian variance with $n\in\{1,5,10\}$, $\mu^*=1$ and $n_0, \mu_0$ spanning $[10^{-2}, 10^2]$. The expectation was estimated with $10^4$ draws for each value of $n_0, \mu_0$ We observe that the line $\frac{\mu_0}{\mu^*} = 1 + \inv{n_0}$ coincides with the bottom valley of this landscape.
	}
	\label{fig:optimal_n0}
\end{figure}

\section{COMPLEMENTS ON GAUSSIANS}
\label{app:gaussian}

In this section, we prove that for a Gaussian, the entropy and log-partition functions are self-concordant.
We also provide complementary illustrations of 
these functions in \cref{fig:logpart-entropy}, 
their gradients (e.g. the mirror maps) in \cref{fig:mirrormaps},
and paths taken by MLE and MAP in \cref{fig:gaussian-paths} .

The Gaussian log-partition function and entropy are, up to constants,
\alignn{
	\logpart(\nat) &= \frac{\nat_1^2}{-4\nat_2} - \half \log(-\nat_2) \label{eq:app-logpart} \\
	\conj(\m) &= - \half \log (\mu_2 - \mu_1^2) \; , \label{eq:app-entropy}
}
where $\nat_1\in\real, \nat_2 <0$ and $\mu_1\in\real, \mu_2>\mu_1^2$.
We provide definitions of self-concordance in \cref{app:self-concordant}.

\begin{figure}[ht]
	\centering
	\includegraphics[width=.45\textwidth]{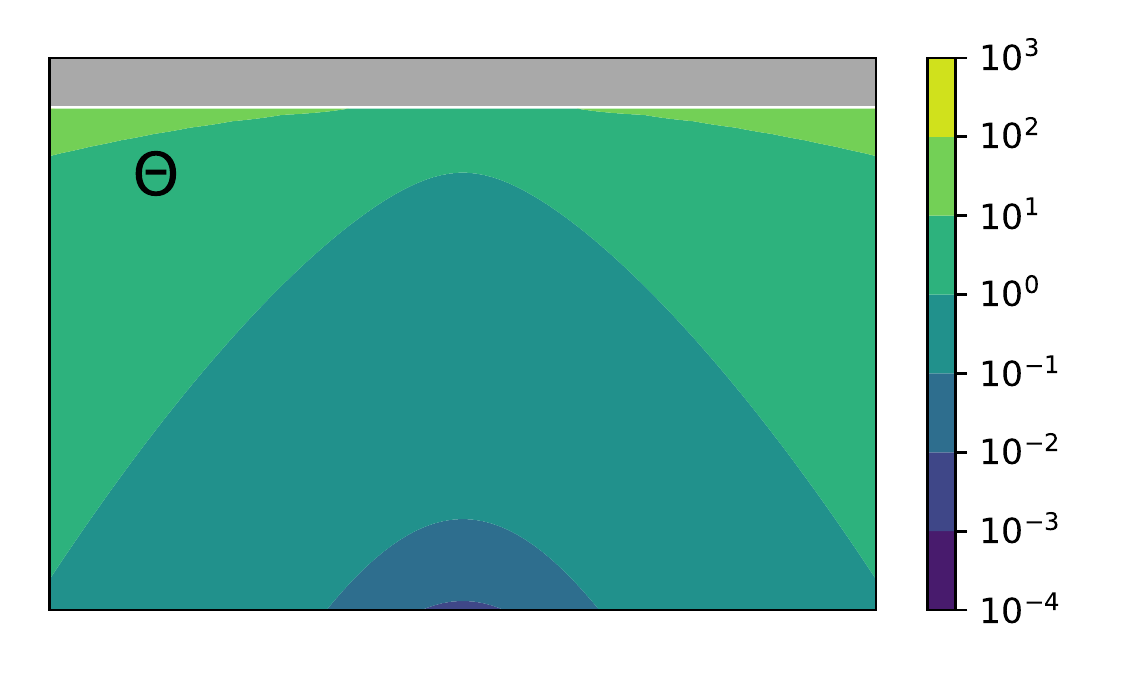}
	\includegraphics[width=.45\textwidth]{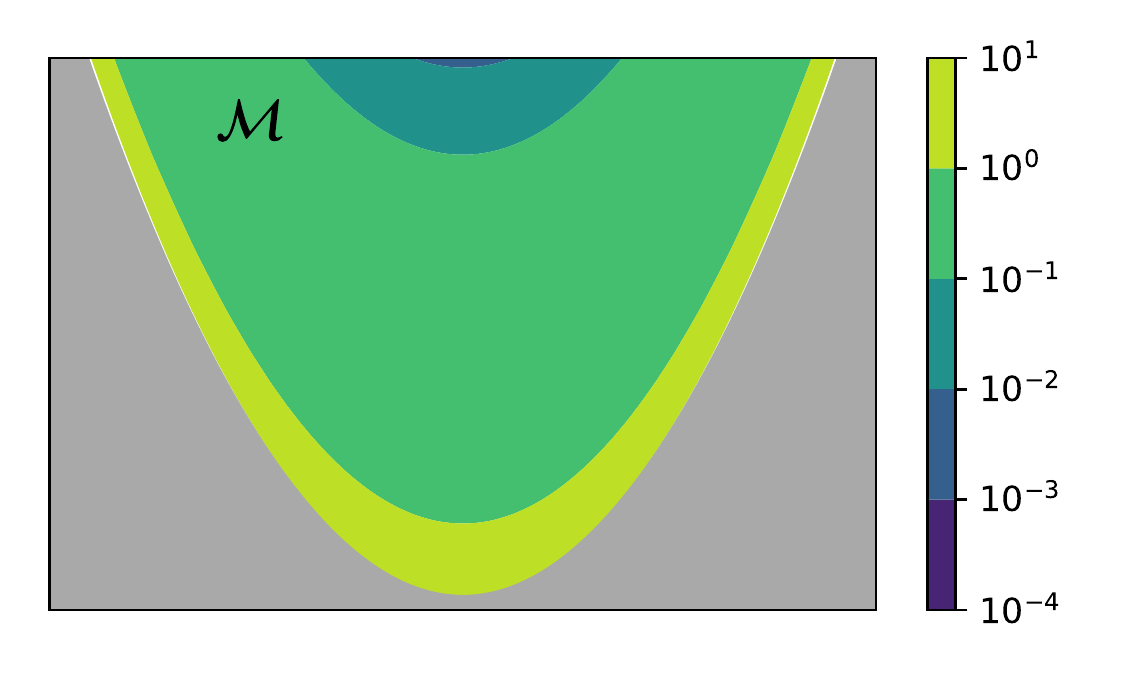}
	\caption{
	(left) Contours of the log-partition function~\eqref{eq:app-logpart}.
	(right) Contours of the entropy~\eqref{eq:app-entropy}.
	}
	\label{fig:logpart-entropy}
\end{figure}

\begin{figure}[ht]
	\centering
	\includegraphics[width=.9\textwidth]{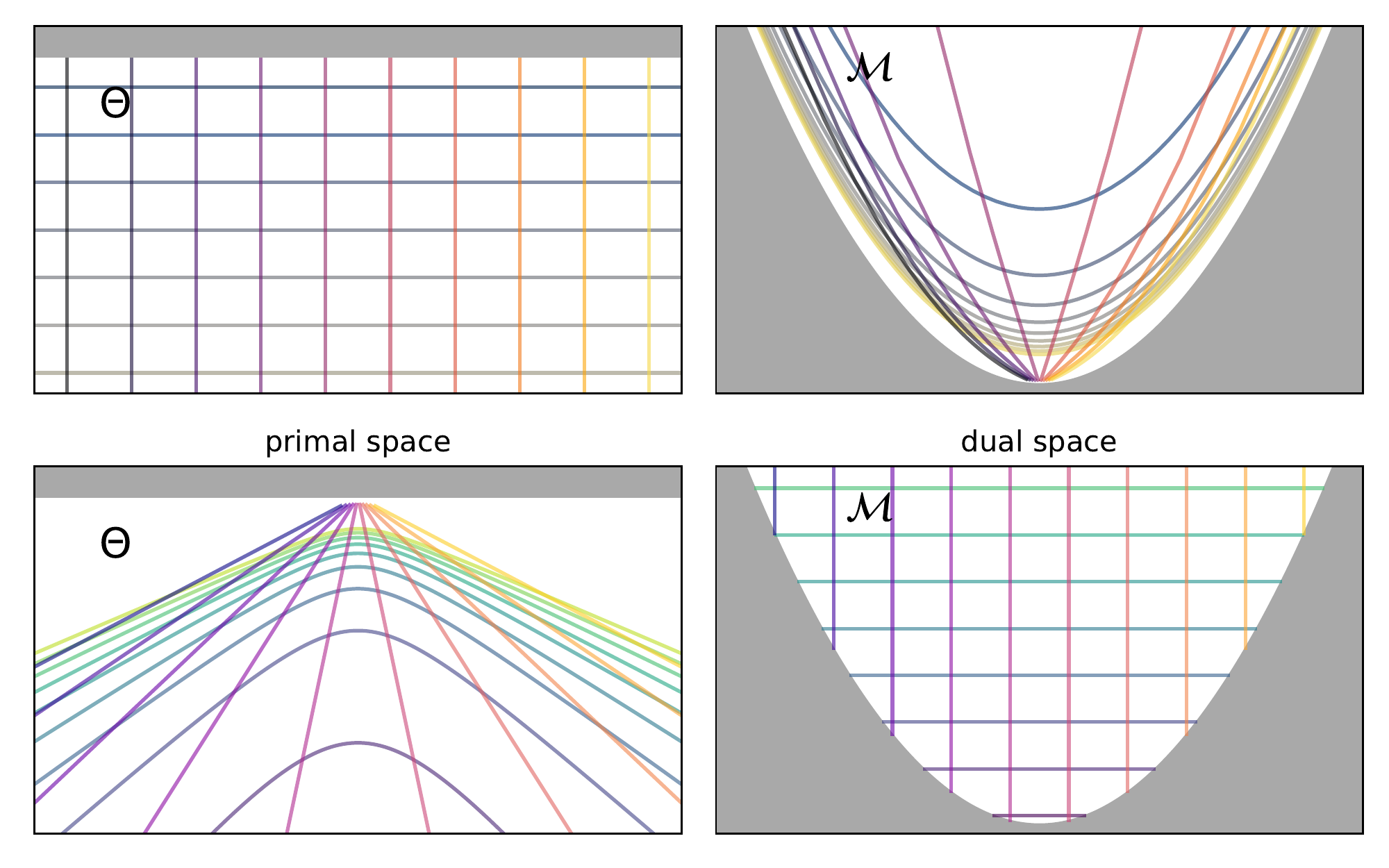}
	\caption{
	(top) Grid deformation produce by $\nabla \logpart(\nat)$.
	(bottom) Grid deformation produced by $\nabla\conj(\m)$.
	}
	\label{fig:mirrormaps}
\end{figure}

\begin{figure}[ht]
	\centering
	\includegraphics[width=.9\textwidth]{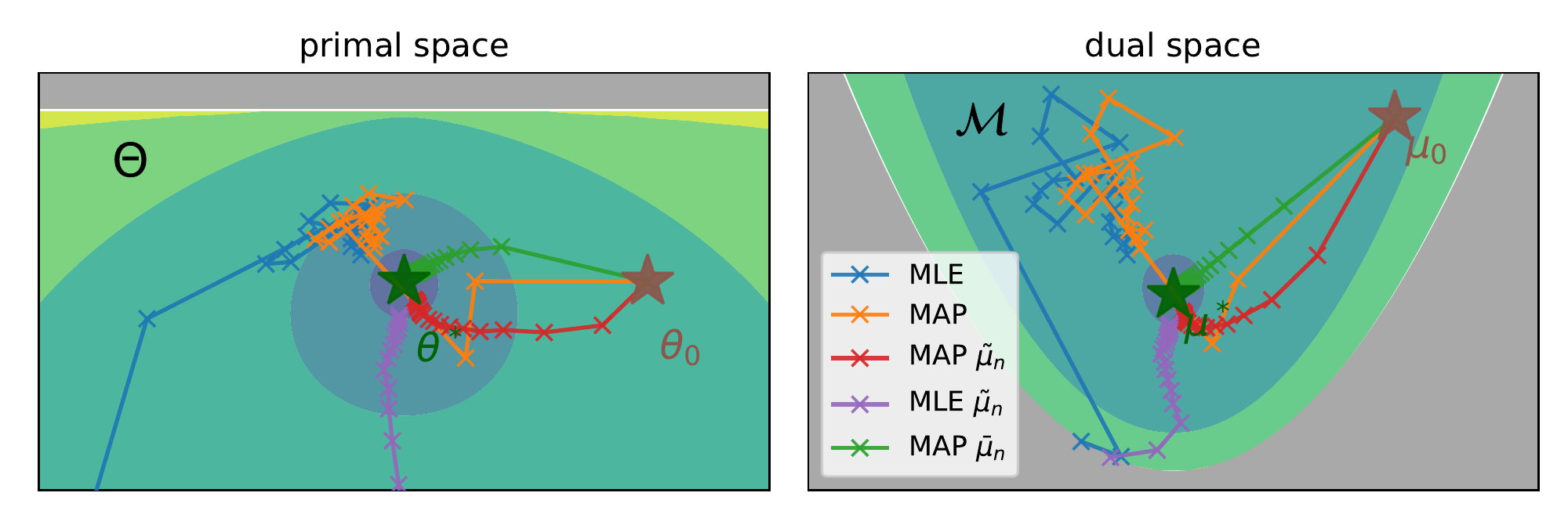}
	\caption{
		Paths taken by MLE (blue) and MAP (orange) on top of contours for $\KL(\nat^* || \nat)$. 
		We set $\mu^*=(0, 1), \mu_0 = (1, 2)$, and $n_0=4$, and $n$ varies from $1$ to $20$.
		In green, red and purple, we represent the paths respectively taken by the MAP dual expectation  $(\bar\nat_n, \bar \mu_n)$, MAP primal expectation $(\tilde\nat_n,\tilde\mu_n)$, and MLE primal expectation. Recall that the MLE dual expectation is $\mu^*$ itself.
	}
	\label{fig:gaussian-paths}
\end{figure}

\paragraph{Entropy is Self-Concordant.}
\citet[Example 4.1.1.4,   p.177]{nesterov2003introductory} proves that logarithmic barriers for second order regions are self-concordant, 
that is functions of the form 
\alignn{
	f(\theta) = -\log\paren{\alpha + \lin{a, \theta} - \frac{1}{2}\lin{\mA \theta, \theta}}
	\text{ on }
	\left\{\theta\in\mathbb{R}^n \cond \alpha + \lin{a, \theta} - \frac{1}{2}\lin{\mA \theta, \theta} > 0\right\}.
}
The entropy~\eqref{eq:app-entropy} fits into this definition with $\mA = \begin{pmatrix} 2 & 0 \\ 0 & 0 \end{pmatrix}$ and $a = ( 0 \ 1 )^\top$.

\paragraph{Log-partition is Self-Concordant.}
As proved in \citet{nesterov1994interior}, self-concordance is preserved by Fenchel conjugacy.
Since $\conj$ is self-concordant, $\logpart$ is as well.
For a more accessible reference, see also \citet[Prop.~6]{sun2019generalized}.

\section{ASYMPTOTIC DERIVATION}
\label{app:asymptote}
In this section we fill-in the lines of \S\ref{ssec:asymptote} to prove Equation~\eqref{eq:asymptote}.
Approximating $\conj$ with a second order Taylor expansion yields
\aligns{
    \bregmanconj(\m^* ; \m)
    &= \frac{1}{2}\norm{\m - \m^*}^2_{\mF}
    + O(\norm{\m - \m^*}^3),
}
where the norm is induced by the matrix
\aligns{
    \mF
    :=\nabla^2\conj(\m^*)
    = \nabla^2\logpart(\nat^*)^{-1}
    = \Cov_{\nat^*}(T)^{-1},
}
where the second equality is a general property of convex conjugates.
Plugging the MLE~\eqref{eq:defMLE} into this quadratic and expanding it yields
\aligns{
	\E \half \norm{\inv{n}  \sum_i T_i - \m^* }_\mF^2 
	&=\inv{2n^2} \sum_i \E \norm{T_i - \m^* }_\mF^2 
	+ \inv{2n^2} \sum_{i\neq j}\expect{T_i - \m^*}^\top \mF \overbrace{\expect{T_j - \m^*}}^{0}\\
	&=\inv{2n}  \E \norm{T_1 - \m^* }_\mF^2 \\
	& = \inv{2n} \Tr(\mF\ \expect{(T_1 - \m^*) (T_1 - \m^*)^\top}) \\
	&= \inv{2 n} \Tr(\mF \Cov_{\nat^*}(T)) = \frac{d}{2n} ,
}
where on the first line we used independence of samples, on the second line we used the fact that samples are identically distributed, and on the third line we used the  trace trick along with the linearity of the trace $\Tr$.
On the way, this also proves that for the MLE $\norm{\m - \m^*}^3 \in O(n^{-\half[3]})$.
This yield the final rate for the MLE
\begin{align}
	\E \bregmanconj \paren{\E [T(X)] ; \inv{n}  \sum_i T_i}
	= \frac{d}{2n} + O(n^{- \frac{3}{2}} ) \; .
\end{align}
For the MAP~\eqref{eq:defMAP}, the quadratic decomposes into bias and variance:
\begin{align}
	\E \half \norm{\mu^* -  \frac{n_0 \mu_0 + \smallsum_i T(x_i)}{n_0+n} }^2_{\mF}
	&= \frac{n d}{2(n+n_0)^2}  +  \frac{n_0^2}{(n+n_0)^2} \half \norm{\mu^* -  \mu_0}^2_{\mF} \\
	&= \frac{d}{2n} + O\left(\frac{1 + \norm{\mu^* -  \mu_0}^2_{\mF} }{n^2} \right) \; .
\end{align}
This $O(n^{-2})$ term is dominated by the $O(n^{-\half[3]})$ term from the quadratic approximation of the Bregman, yielding the same first order rate as for the MLE
\begin{align}
	\E \bregmanconj \paren{\E [T(X)] ; \frac{n_0 \mu_0 + \smallsum_i T(x_i)}{n_0+n} }
	= \frac{d}{2n} + O(n^{- \frac{3}{2}} ) \; .
\end{align}

\section{SELF-CONCORDANCE}
\label{app:self-concordant}

In this section, we define self-concordance and we prove Proposition~\ref{prop:selfConcordant}.

\begin{definition}[Self-concordance]
\label{def:self-concordance}
A convex function is $F:\real^p \rightarrow \real$ is self-concordant if it is differentiable $3$ times and if for all $w, v \in\real^p$ the function $g(t) = F(w+tv)$ satisfies for all feasible $t$
\alignn{
	\abs{g'''(t)} \leq 2 g''(t)^{\frac{3}{2}} \; .
}
\end{definition}


{\bf Clarification:} In the main text in Section~\ref{ssec:local-quadratic}, we claimed that the Fenchel conjugate of a 1-dimensional function is also self-concordant. 
Actually, this is also true in higher dimensions, as proved by \citet{nesterov1994interior}. See \citet[Prop.~6]{sun2019generalized} for a more accessible reference.

\subsection{Properties of Self-concordant functions}

We quickly review some important properties of self-concordant functions, introduced in \citep{nesterov2003introductory}. We start with a some notation. Let $A^*$ be a self-concordant function. Then, we write
\begin{itemize}
	\item the local norm $\|\cdot\|_x = \sqrt{ \langle \nabla^2A^*(x)\cdot,\, \cdot \rangle }$ 
	\item the distance function $\omega(t) = t-\ln (1+t),\;t\geq 0$, and its dual $\;\omega^*(t) = -t-\ln(1-t)$ defined for $t\in [0,1]$.
\end{itemize}
Note that $\omega^*(t)$ is positive, convex and monotonically increasing for $t\in[0,1]$.

\begin{figure}[ht]
	\centering
	\includegraphics[width=0.4\textwidth]{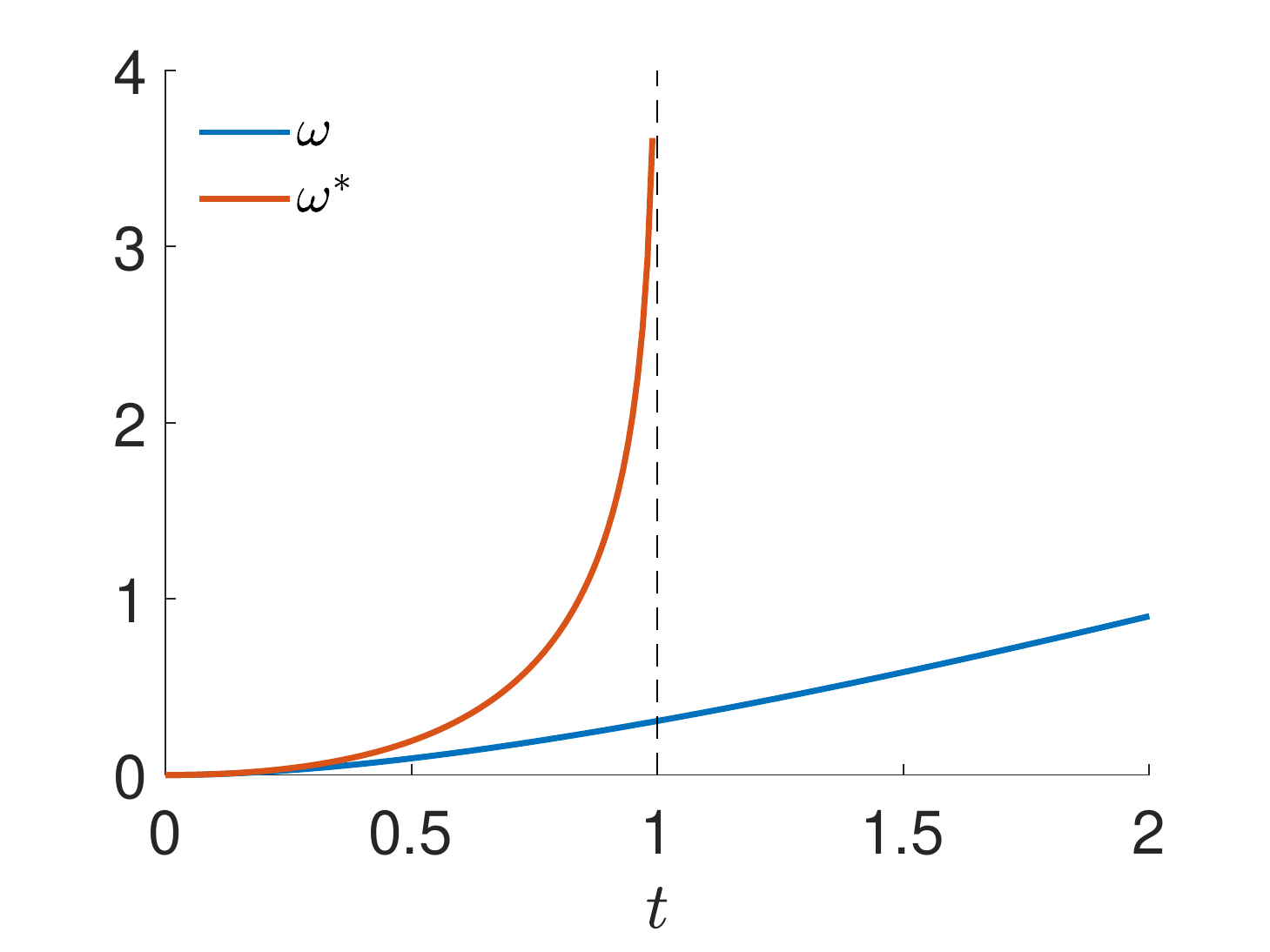}
	\includegraphics[width=0.4\textwidth]{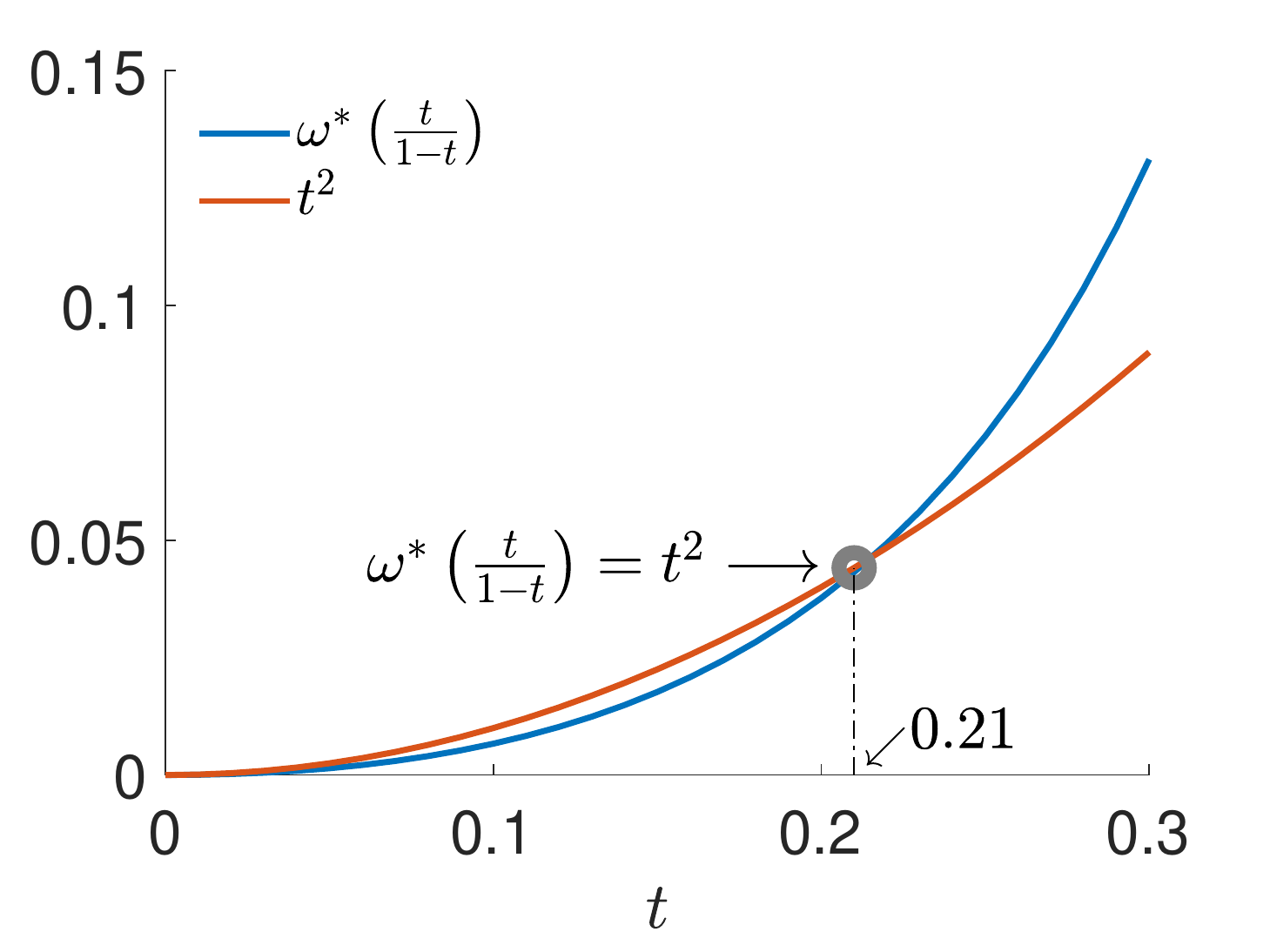}
	\caption{(left) Graph of the distance function $\omega$ and its dual $\omega^*$. (right) Graph of the dual of distance function $\omega^*$ evaluated at $\frac{t}{1-t}$, compared with $t^2$. The two curves cross each other at $t\approx 0.21$.}
	\label{fig:omega}
\end{figure}

We now present two important results. The first one shows how to convert the local norm $\|y-x\|_y$ using $\|y-x\|_x$.
\begin{proposition}\label{prop:conversion_norm}
	(Conversion of norms, \citep[Theorem 4.1.5]{nesterov2003introductory}) For any $x,\,y\in\dom A^*$, if $\|y-x\|_x<1$, then
	\[
		\|y-x\|_y \leq \frac{\|y-x\|_x}{1-\|y-x\|_x}.
	\]
\end{proposition}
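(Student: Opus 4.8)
The plan is to reduce the multidimensional statement to a one-dimensional differential inequality along the segment joining $x$ and $y$, which is exactly where the self-concordance definition (\cref{def:self-concordance}) bites. Write $h = y - x$; since $\dom\conj$ is open and convex, the whole segment $x + th$ for $t\in[0,1]$ stays in the domain. I would introduce
\[
	\psi(t) := \norm{h}_{x+th}^2 = \lin{\nabla^2\conj(x+th)\, h,\, h},
\]
so that $\psi(0) = \norm{y-x}_x^2$ and $\psi(1) = \norm{y-x}_y^2$ are precisely the two quantities to be compared. Setting $g(t) = \conj(x+th)$, a direct computation gives $g''(t) = \psi(t)$ and $g'''(t) = \psi'(t)$, so the self-concordance bound $\abs{g'''(t)} \leq 2\, g''(t)^{3/2}$ becomes the scalar differential inequality $\abs{\psi'(t)} \leq 2\,\psi(t)^{3/2}$ on $[0,1]$.

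The second step linearizes this inequality via the substitution $\sigma(t) := \psi(t)^{-1/2}$. Because $\conj$ is strictly convex, its Hessian is positive definite, so $\psi(t) > 0$ throughout and $\sigma$ is well-defined and differentiable. Differentiating, $\sigma'(t) = -\tfrac{1}{2}\,\psi(t)^{-3/2}\psi'(t)$, and the differential inequality collapses to the clean bound $\abs{\sigma'(t)} \leq 1$. Integrating from $0$ to $1$ then yields $\sigma(1) \geq \sigma(0) - 1$, i.e.
\[
	\frac{1}{\norm{y-x}_y} \geq \frac{1}{\norm{y-x}_x} - 1 = \frac{1 - \norm{y-x}_x}{\norm{y-x}_x}.
\]
Under the hypothesis $\norm{y-x}_x < 1$ the right-hand side is positive, so inverting gives exactly $\norm{y-x}_y \leq \norm{y-x}_x / (1 - \norm{y-x}_x)$, as claimed. (The opposite estimate $\sigma(1)\leq \sigma(0)+1$ would produce the companion lower bound $\norm{y-x}_y \geq \norm{y-x}_x/(1+\norm{y-x}_x)$, which I do not need here.)

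The main obstacle is purely one of rigor around the substitution: I must ensure $\psi(t) > 0$ for all $t\in[0,1]$ (guaranteed by positive-definiteness of the Hessian of the strictly convex $\conj$) and that the segment never leaves $\dom\conj$ (guaranteed by convexity of the domain), so that $\sigma = \psi^{-1/2}$ is genuinely $C^1$ and the integration of $\abs{\sigma'}\leq 1$ is legitimate. Once positivity and domain-containment are secured, the remaining steps are elementary calculus, and no delicate estimate is needed beyond the self-concordance inequality itself.
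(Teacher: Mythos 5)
Your proof is correct and is essentially the same argument as the source the paper defers to: the paper gives no proof of this proposition, citing Nesterov's Theorem 4.1.5 directly, and your reduction to the scalar inequality $\abs{\psi'(t)} \leq 2\psi(t)^{3/2}$ along the segment, followed by the substitution $\sigma = \psi^{-1/2}$ with $\abs{\sigma'} \leq 1$ and integration, is precisely the standard proof of that theorem, and it needs only the line-restriction definition of self-concordance (\cref{def:self-concordance}). One small point of rigor: strict convexity alone does not guarantee a positive-definite Hessian (consider $t \mapsto t^4$ at the origin), but in this paper's setting $\nabla^2 \conj(\m) = \nabla^2 \logpart(\nat)^{-1} = \Cov_{\nat}[T(X)]^{-1} \succ 0$, and the case $x = y$ holds trivially, so your positivity claim for $\psi$ is justified here.
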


The next result shows that, if $y$ is sufficiently close to $x$, then we can bound the Bregman divergence of $A^*$ using the distance function $\omega^*$ and local norms.
\begin{proposition}(Upper bound of self-concordant functions \citep[Theorem 4.1.8]{nesterov2003introductory}) \label{prop:upper_bound_self_concordance}
	For any $x,\,y\in\dom A^*$, if $\|y-x\|_x<1$, then
	\[
		\bregmanconj(y,x) \leq \omega^*(\|y-x\|_x).
	\]
\end{proposition}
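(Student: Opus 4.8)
The plan is to reduce the statement to a one-dimensional self-concordant function along the segment from $x$ to $y$ and then integrate a sharp pointwise bound on its second derivative. First I would set $g(t) = \conj(x + t(y-x))$ for $t\in[0,1]$; by \cref{def:self-concordance} this restriction to a line is itself self-concordant, i.e.\ $|g'''(t)| \leq 2\,g''(t)^{3/2}$, and by construction $g''(t) = \langle \nabla^2\conj(x+t(y-x))(y-x),\,y-x\rangle \geq 0$, so that $g''(0) = \|y-x\|_x^2$. Writing $r := \|y-x\|_x$ (the case $r=0$ being trivial since then $y=x$), Taylor's formula with integral remainder rewrites the Bregman divergence as
\begin{equation}
	\bregmanconj(y,x) = g(1) - g(0) - g'(0) = \int_0^1 (1-s)\,g''(s)\,ds,
\end{equation}
so it suffices to control $g''(s)$ on $[0,1]$ in terms of $r$ alone.

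The crux is to convert the third-derivative self-concordance inequality into a uniform quadratic bound on $g''$. I would introduce $\phi(t) := g''(t)^{-1/2}$, which is well defined since $g''(0) = r^2 > 0$ and self-concordance propagates strict positivity of $g''$ along the segment. Differentiating gives $\phi'(t) = -\frac{1}{2}\,g''(t)^{-3/2}\,g'''(t)$, hence $|\phi'(t)| \leq 1$ by self-concordance, so $\phi$ is $1$-Lipschitz. Therefore $\phi(t) \geq \phi(0) - t = r^{-1} - t$, and since $r<1$ guarantees $1-rs>0$ for all $s\in[0,1]$, squaring and inverting yields the sharp control
\begin{equation}
	g''(s) \leq \frac{r^2}{(1-rs)^2}, \qquad s\in[0,1].
\end{equation}

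Finally I would substitute this bound into the integral remainder and evaluate it in closed form, via the substitution $u = 1-rs$ followed by partial fractions:
\begin{equation}
	\bregmanconj(y,x) \leq \int_0^1 (1-s)\,\frac{r^2}{(1-rs)^2}\,ds = -r - \ln(1-r) = \omega^*(r),
\end{equation}
which is exactly $\omega^*(\|y-x\|_x)$ and completes the argument. The main obstacle is the middle step: turning the bound on $g'''$ into the uniform estimate on $g''$ through the $1$-Lipschitz reciprocal-square-root trick, and justifying that $g''$ stays positive along the whole segment (which follows because the Lipschitz bound keeps $\phi$ finite, hence $g''$ bounded away from zero). Once that estimate is established, the remaining integral is a routine computation whose value happens to be precisely $\omega^*(r)$, so no slack is lost and the inequality is tight in the self-concordant model.
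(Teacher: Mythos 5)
Your proof is correct and is essentially the argument the paper implicitly relies on: the paper does not prove this proposition itself but cites \citet[Theorem 4.1.8]{nesterov2003introductory}, whose proof is exactly your reduction to the line $g(t)=\conj(x+t(y-x))$, the $1$-Lipschitz bound on $g''(t)^{-1/2}$ yielding $g''(s)\leq r^2/(1-rs)^2$ for $r=\|y-x\|_x<1$, and integration of the Taylor remainder to $-r-\ln(1-r)=\omega^*(r)$. Your parenthetical handling of the positivity of $g''$ along the segment (the upper Lipschitz estimate $\phi(t)\leq\phi(0)+t$ keeps $\phi$ finite, hence $g''$ bounded away from zero) is also the standard way to close that gap, so nothing is missing.
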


We are now ready to prove Proposition~\ref{prop:selfConcordant}.

\subsection{Proof of Proposition~\ref{prop:selfConcordant}.}

We start with Proposition~\ref{prop:upper_bound_self_concordance}, evaluated at $y=\meanp^*$ and $x=\meanp$:
\[
	\bregmanconj(\meanp^*,\meanp) \leq \omega^*(\|\meanp^*-\meanp\|_{\meanp}).
\]
This hold if $\|\meanp^*-\meanp\|_{\meanp}<1$. Since $\omega^*$ is monotonically increasing, we can replace $\|\meanp^*-\meanp\|_{\meanp}$ by its upper bound from Proposition~\ref{prop:conversion_norm},
\[
	\omega^*(\|\meanp^*-\meanp\|_{\meanp}) \leq \omega^*\left(\frac{\|\meanp^*-\meanp\|_{\meanp^*}}{1-\|\meanp^*-\meanp\|_{\meanp^*}}\right),
\]
under the conditions that $\|\meanp^*-\meanp\|_{\meanp^*}<1$ (to satisfy the assumption of Proposition~\ref{prop:conversion_norm}) and $\frac{\|\meanp^*-\meanp\|_{\meanp^*}}{1-\|\meanp^*-\meanp\|_{\meanp^*}} <1$ (to ensure that $\|\meanp^*-\meanp\|_{\meanp} <1$). Those two conditions holds if $\|\meanp^*-\meanp\|_{\meanp^*} <0.5$. Now, we use the bound (see figure \ref{fig:omega})
\[
	\omega^*\left(\frac{t}{1-t}\right) \leq t^2, \quad 0\leq t \leq 0.21,
\]
and replace $t$ by $\|\meanp^*-\meanp\|_{\meanp^*}$. This finally gives the sequence of inequalities
\[
	\bregmanconj(\meanp^*,\meanp) \leq \omega^*(\|\meanp^*-\meanp\|_{\meanp}) \leq \omega^*\left(\frac{\|\meanp^*-\meanp\|_{\meanp^*}}{1-\|\meanp^*-\meanp\|_{\meanp^*}}\right) \leq \|\meanp^*-\meanp\|_{\meanp^*}^2,
\]
that holds while $ \|\meanp^*-\meanp\|_{\meanp^*}<0.21$, which is the desired result.

\section{BIAS-VARIANCE}
\label{app:bias-variance}
In this section, we start from the notions of bias and variance introduced in \cref{eq:bias-variance}.
First, we prove that the bias of the MLE of a Gaussian variance decreases in $O(\inv{n^2})$. 
Then we prove that assuming~\eqref{eq:hanzely} holds, whether uniformly or in expectation, yields a convergence rate on the variance term.

\subsection{Bias of a Gaussian Variance MLE}
For a Gaussian variance model, the MLE follows a scaled $\chi^2(n)$ distribution. 
This means that
\alignn{
	\frac{\m^*}{\tilde \m_n} 
	= \frac{\tilde\nat_n}{\nat^*}
	=\expect{\frac{\hat\nat_n}{\nat^*}}
	=\expect{\frac{\m^*}{\hat \m_n} }
	=\expect{\frac{n}{\chi^2(n)}}
	= \frac{n}{n-2}\; .
}
Consequently,
\alignn{
	\bregmanconj(\m^*, \tilde \m_n) 
	&= \half \paren{\frac{n}{n-2} -1 + \log\frac{n-2}{n}}\\
	&= \frac{1}{n-2} + \half \log\paren{1 - \frac{2}{n}}\\
	&\leq \inv{n-2} - \inv{n} 
	= \frac{2}{n(n-2)},
}
so the bias of the MLE of a Gaussian Variance decreases like $O\paren{\inv{n^2}}$.

\subsection{Expectation of SMD's Variance Assumption}
The first step is to notice the symmetrized Bregman can be expressed as an inner product between primal and dual parameters
\alignn{
	\cS_\conj (\mu , \bar\mu)
	= \lin{ \nabla \conj(\mu) - \nabla \conj(\bar\mu) , \mu - \bar\mu}
	= \lin{ \nat -\bar\nat , \mu - \bar\mu} \; .
}
Now notice that~\eqref{eq:hanzely} features $\mu := \hat \mu_{n+1} = \hat\m_{n} - \lr g(\hat \nat_{n})$ stochastic and $\bar\mu := \bar\mu_{n+1} =  \hat\m_{n}   -\lr \nabla f(\hat \nat_{n})$ deterministic such that $\E_g[\mu] = \bar\mu$.
For such a pair of variables, the expectation of the symmetrized Bregman corresponds to a covariance between primal and dual parameters
\alignn{
	\expect{\cS_\conj (\mu , \E[\mu])} 
	= \expect{\lin{\nat, \mu - \E[\mu]}}
	= \underbrace{\expect{\lin{\nat, \mu}} -\lin{\expect{\nat}, \expect{\mu} } }_{\Cov(\nat, \mu)}
	=\expect{\lin{\nat - \E[\nat],\mu}} 
	= \expect{\cS_\logpart(\nat, \E[\nat])} \; .
}
The last equality holds by symmetry between the roles of $\logpart$ and $\conj$.
Note that the middle covariance formulation is actually the one  used by \citet{hanzely2018fastest}.
Now, \cref{eq:bias-variance} defines the variance as 
\alignn{
	\expect[1:n]{\bregmanconj(\tilde \m_n , \hat\m_n) }
	=\expect[1:n]{\bregman(\hat\nat_n , \E_{1:n}[\hat\nat_n]) }
	\leq \expect[1:n]{\cS_A(\hat\nat_n , \E_{1:n}[\hat\nat_n]) }
	=  \expect[1:n]{\cS_\conj(\hat\mu_n , \E_{1:n}[\hat\mu_n]) }\;,
	\label{eq:bregman-covariance}
}
where expectations $\E_{1:n}$ are on all samples $X_1, \dots, X_n$, whereas~\eqref{eq:hanzely} is written with 
\alignn{
	\expect[n]{\cS_\conj(\hat\mu_n , \E_{n}[\hat\mu_n]) } \leq \lr^2 C \; ,
}
where the expectation $\E_{n}$ is taken over only the last sample $X_n$, and the bound should hold uniformly over all $\hat\mu_{n-1}$.
Taking the expectation over $\hat\mu_{n-1}$ instead gives
\alignn{
	\expect[1:n]{\cS_\conj(\hat\mu_n , \E_{n}[\hat\mu_n]) }  \leq \lr^2 C \;.
}
The only difference with the RHS of \cref{eq:bregman-covariance} is in the inner expectation.
To overcome this difference, we need to plug in the form of $\hat \mu_n = \frac{n_0\mu_0 + \sum_i T_i}{n_0 +n}$. Notice that 
\alignn{
	&\MAPm - \E_n[\MAPm] = \frac{T_n - \m^*}{n_0+n} \\
	\implies & \expect[1:n]{\cS_\conj(\hat\mu_n , \E_{n}[\hat\mu_n]) }
	= \inv{n_0+n} \expect[1:n]{ \lin{\MAPt , T_1 - \m^*} },
}
while
\alignn{
	&\MAPm - \E_{1:n}[\MAPm] = \frac{\smallsum_i (T_i - \m^*)}{n_0+n} \\
	\implies & \expect[1:n]{\cS_\conj(\hat\mu_n , \E_{1:n}[\hat\mu_n]) }
	=\frac{n}{n_0+n} \expect[1:n]{ \lin{\MAPt , T_1 - \m^*} } \; .
}
In the end, we get that the variance is dominated by $n$ times the expectation of \cref{eq:hanzely}:
\alignn{
	\expect[1:n]{\bregmanconj(\tilde \m_n , \hat\m_n) }
	\leq n \expect[1:n]{\cS_\conj(\hat\mu_n , \E_{n}[\hat\mu_n]) }  \; .	
}
If assumption~\eqref{eq:hanzely} holds, then we have
\alignn{
\expect[1:n]{\bregmanconj(\tilde \m_n , \hat\m_n) }
\leq n \lr_n^2 C \in O\paren{\inv{n}} \; ,
}
where we assumed $\lr_n  \in O\paren{\inv{n}}$.
In conclusion, assuming~\eqref{eq:hanzely} holds uniformly or in expectation immediately implies a $O\paren{\inv{n}}$ convergence rate on the variance.

\section{REVIEW OF SMD}
\label{app:SMD}

We use this section to give more details on the (stochastic) mirror descent algorithm.
We start with gradient descent with step-sizes $\gamma$. 
the update $\theta_{n+1} = \theta_n - \gamma \nabla f(\theta_n)$ can be viewed as the minimization of the linear approximation of $f$ at $\theta_n$
$f(\theta) \approx f(\theta_n) + \lin{\nabla f(\theta_n), \theta - \theta_n}$, 
alongside with quadratic penalty scaled by ${1}/{\gamma}$:
\alignn{
	\theta_{n+1} = \arg\min_\theta f(\theta_n) + \lin{\nabla f(\theta_n), \theta - \theta_n} + \frac{1}{\gamma} \frac{1}{2}\norm{\theta - \theta_n}^2.
}
Mirror descent generalizes the above, using the Bregman divergence induced by a (Legendre) function $A$ instead of the Euclidean norm as follow, 
\alignn{\label{eq:mirror-descent-primal}
	\theta_{n+1} = \arg\min_\theta f(\theta_n) + \lin{\nabla f(\theta_n), \theta - \theta_n} + \frac{1}{\gamma} \bregman(\theta, \theta_n).
}
Mirror descent coincides with gradient descent if $A(\theta) = \frac{1}{2}\norm{\theta}^2$.
As \cref{eq:mirror-descent-primal} is convex, the minimum is at a stationary point, 
found by taking the derivative and setting to 0, 
leading to the update $\theta_{n+1}$ satisfying
\alignn{
	\nabla f(\theta_n) + \frac{1}{\gamma}\paren{\nabla A(\theta_{n+1}) - \nabla A(\theta_n)} = 0
	&&\implies&&
	\nabla A(\theta_{n+1}) = \nabla A(\theta_n) - \gamma \nabla f(\theta_n).
}
Expressed with the dual parameters, we obtain $\mu_{n} = \nabla A(\theta_{n})$, 
$\mu_{n+1} = \mu_n - \gamma \nabla f(\theta_n)$. 

{\bf In our case,} where the objective function is the (negative) log-likelihood of an exponential family, we have
\alignn{
	f(\theta) = A(\theta) - \lin{\frac{1}{n} \sum_{i=1}^n T(X_i), \theta}.
}
Using Mirror descent with a step-size of $1$ and the log-partition function $A$ as the reference function gives
\alignn{\begin{aligned}
	\mu_{n+1} 
	= \mu_n - \nabla f(\theta_n)
	= \mu_n - \paren{\nabla A(\theta_n) - \frac{1}{n}\sum_{i=1}^n T(x_i)}
	= \frac{1}{n}\sum_{i=1}^n T(x_i).
	\end{aligned}
}
In a stochastic, online version 
where the linearization of the objective is obtained from iid samples, 
a decreasing step-size of $\gamma_n = 1/n$ recovers the ``online'' estimate of the MLE.
The case of $\mu_1 = T(x_1)$ follows from the above, 
and in general, assuming it holds for $\mu_n$, 
\alignn{\begin{aligned}
	\mu_{n+1} &= \mu_n - \gamma_n g(\theta_n)
	= \mu_n - \gamma_n (\mu_n - x_n)
	\\
	&= (1 - \frac{1}{n}) \mu_n + \frac{1}{n} T(x_n)
	=
	\frac{n-1}{n} \frac{1}{n-1}\sum_{i=1}^{n-1} T(x_i)
	= 
	\frac{1}{n} \sum_{i=1}^{n} T(x_i)
	+ \frac{1}{n} T(x_n).
\end{aligned}}
The derivation in the main text gives the more general result, 
of using step-sizes of the form $1/(n+n_0)$ to recover online MAP estimation 
with a conjugate prior depending on $n_0$ and the initial estimate of the parameters $\theta_0$.

 \end{document}